\newcommand{\tr}[1]{\textrm{#1}}
\newcommand{\mr}[1]{\mathrm{#1}}
\newcommand{\tnr}[1]{{\textnormal{#1}}}
\newcommand{\mc}[1]{\mathcal{#1}}
\newcommand{\ms}[1]{\mathds{#1}}
\newcommand{\ov}[1]{\overline{#1}}
\newcommand{\un}[1]{\underline{#1}}
\newcommand{\bI}{\boldsymbol{I}}
\newcommand{\bV}{\boldsymbol{V}}
\newcommand{\bu}{\boldsymbol{u}}
\newcommand{\bv}{\boldsymbol{v}}
\newcommand{\bx}{\boldsymbol{x}}
\newcommand{\by}{\boldsymbol{y}}
\newcommand{\bzero}{\boldsymbol{0}}
\newcommand{\bone}{\boldsymbol{1}}
\newcommand{\btheta}{\boldsymbol{\theta}}
\newcommand{\bmu}{\boldsymbol{\mu}}
\newcommand{\figref}[1]{Fig.~\ref{#1}}
\newcommand{\secref}[1]{Sec.~\ref{#1}}
\newcommand{\appref}[1]{Appendix~\ref{#1}}
\newcommand{\propref}[1]{Proposition~\ref{#1}}
\newcommand{\tabref}[1]{Table~\ref{#1}}
\newcommand{\ie}{i.e.,~} 		
\newcommand{\eg}{e.g.,~}	
\newcommand{\argmax}{\mathop{\mr{argmax}}}
\newcommand{\set}[1]{\{#1\}}
\newcommand{\SET}[1]{\left\{#1\right\}}
\newcommand{\cd}{\cdot}
\newcommand{\ld}{\ldots}
\newcommand{\e}{\mr{e}}
\newcommand{\PR}[1]{\Pr\SET{#1}}       	
\newcommand{\pdf}{f}            			
\newcommand{\IND}[1]{\ms{I}\big[{#1}\big]}   	
\newcommand{\Ex}{\ms{E}}     			
\newcommand{\T}{^{\mr{T}}}            		
\newcommand{\dd}{\,\mr{d}}             		
\newcommand{\mcI}{\mc{I}}
\newcommand{\mcJ}{\mc{J}}
\newcommand{\mcN}{\mc{N}}
\newcommand{\mcP}{\mc{P}}
\newcommand{\mcY}{\mc{Y}}
\newcommand{\sizf}{0.8}
\newcommand{\sizfs}{0.65}   
\newacronym[\glsshortpluralkey=PDFs,\glslongpluralkey=probability density functions]{pdf}{PDF}{probability density function}
\newacronym[\glsshortpluralkey=CDFs,\glslongpluralkey=cumulative density functions]{cdf}{CDF}{cumulative density function}
\newacronym[\glsshortpluralkey=CCDFs,\glslongpluralkey=complementary cumulative density functions]{ccdf}{CDF}{complementary cumulative density function}
\newacronym[\glsshortpluralkey=PMFs,\glslongpluralkey=probability mass functions]{pmf}{PMF}{probability mass function}
\newacronym[]{lhs}{l.h.s.}{left-hand side}
\newacronym[]{rhs}{r.h.s.}{right-hand side} 
\newacronym[]{bicm}{BICM}{bit-interleaved coded modulation}
\newacronym[]{bicmid}{BICM-ID}{BICM with iterative demapping}
\newacronym[]{cm}{CM}{coded modulation}
\newacronym[]{tcm}{TCM}{trellis-coded modulation}
\newacronym[]{mlc}{MLC}{multi-level coding}
\newacronym[]{pam}{PAM}{pulse amplitude modulation}
\newacronym[]{bpsk}{BPSK}{binary phase shift keying}
\newacronym[]{qam}{QAM}{quadrature amplitude modulation}
\newacronym[]{16qam}{16-QAM}{16-points quadrature amplitude modulation}
\newacronym[]{psk}{PSK}{phase shift keying}
\newacronym[\glsshortpluralkey=LLRs,\glslongpluralkey=logarithmic likelihood ratios]{llr}{LLR}{logarithmic likelihood ratio}
\newacronym[]{oc}{OC}{operating characteristic}
\newacronym[]{map}{MAP}{maximum a posteriori}
\newacronym[]{ml}{ML}{maximum likelihood}
\newacronym[]{dmp}{DMP}{discretized message passing}
\newacronym[]{mp}{MP}{message passing}
\newacronym[]{ep}{EP}{expectation propagation}
\newacronym[\glsshortpluralkey=MIs,\glslongpluralkey=mutual informations]{mi}{MI}{mutual information}
\newacronym[\glsshortpluralkey=GMIs,\glslongpluralkey=generalized mutual informations]{gmi}{GMI}{generalized mutual information}
\newacronym[]{eesm}{EESM}{exponential effective-SNR-mapping}
\newacronym[]{bicm-gmi}{BICM-GMI}{BICM generalized mutual information}
\newacronym[]{awgn}{AWGN}{additive white Gaussian noise}
\newacronym[]{bsc}{BSC}{binary symetric channel}
\newacronym[]{amc}{AMC}{adaptive modulation and coding}
\newacronym[]{csi}{CSI}{channel state information}
\newacronym[]{cqi}{CQI}{channel quality indicator}
\newacronym[]{kl}{KL}{Kullback-Leibler}
\newacronym[]{cmm}{CMM}{circular moment matching}
\newacronym[]{ga}{GA}{Gaussian approximation}
\newacronym[]{sp}{SP}{set-partitioning}
\newacronym[]{gsm}{GSM}{global system for mobile communications}
\newacronym[]{edge}{EDGE}{enhanced data rates for GSM evolution}
\newacronym[]{3gpp}{3GPP}{3rd generation partnership project}
\newacronym[]{umts}{UMTS}{Universal Mobile Telecommunication System}
\newacronym[]{lte}{LTE}{Long Term Evolution}
\newacronym[]{dvb}{DVB}{digital video broadcasting}
\newacronym[]{fdd}{FDD}{Frequency Division Duplexing}
\newacronym[\glsshortpluralkey=CCs,\glslongpluralkey=convolutional codes]{cc}{CC}{convolutional code}
\newacronym[\glsshortpluralkey=PCCCs,\glslongpluralkey=parallel concatenated convolutional codes]{pccc}{PCCC}{parallel concatenated convolutional code}
\newacronym[\glsshortpluralkey=TCs,\glslongpluralkey=turbo codes]{tc}{TC}{turbo code}
\newacronym{ldpc}{LDPC}{low-density parity-check}
\newacronym[]{ofdm}{OFDM}{orthogonal frequency-division multiplexing}
\newacronym[]{bep}{BEP}{bit-error probability}
\newacronym[]{wep}{WEP}{word-error probability}
\newacronym[]{sep}{SEP}{symbol-error probability}
\newacronym[]{pep}{PEP}{pairwise-error probability}
\newacronym[]{ttcm}{TTCM}{turbo-trellis coded modulation}
\newacronym[]{uep}{UEP}{unequal error protection}
\newacronym[\glsshortpluralkey=CENCs,\glslongpluralkey=convolutional encoders]{cenc}{CENC}{convolutional encoder}
\newacronym[]{mimo}{MIMO}{multiple-input multiple-output}
\newacronym[\glsshortpluralkey=SNRs,\glslongpluralkey=signal-to-noise ratios]{snr}{SNR}{signal-to-noise ratio}
\newacronym[\glsshortpluralkey=SINRs,\glslongpluralkey=the signal-to-interference-plus-noise ratios]{sinr}{SINR}{the signal-to-interference-plus-noise ratio}
\newacronym[]{msb}{MSB}{most-significative bit}
\newacronym[]{bcjr}{BCJR}{Bahl--Cocke--Jelinek--Raviv}
\newacronym[]{cbc}{CBC}{Colavolpe--Barbieri--Caire}
\newacronym[]{skr}{SKR}{Shayovitz--Kreimer--Raphaeli}
\newacronym[\glsshortpluralkey=SEDs,\glslongpluralkey=squared Euclidean distances]{sed}{SED}{squared Euclidean distance}
\newacronym[\glsshortpluralkey=EDs,\glslongpluralkey=Euclidean distances]{ed}{ED}{Euclidean distance}
\newacronym[\glsshortpluralkey=MEDs,\glslongpluralkey=minimum Euclidean distances]{med}{MED}{minimum Euclidean distance}
\newacronym[]{core}{CoRe}{constellation rearrangement}
\newacronym[]{msd}{MSD}{multistage decoding}
\newacronym[]{pdl}{PDL}{parallel decoding of the individual levels}
\newacronym[\glsshortpluralkey=GCs,\glslongpluralkey=Gray codes]{gc}{GC}{Gray code}
\newacronym[]{brgc}{BRGC}{binary-reflected Gray code}
\newacronym[]{nbc}{NBC}{natural binary code}
\newacronym[]{fbc}{FBC}{folded-binary code}
\newacronym[]{bsgc}{BSGC}{binary semi-Gray code}
\newacronym[]{msp}{MSP}{modified set-partitioning}
\newacronym[]{ssp}{SSP}{semi set-partitioning}
\newacronym[]{fhd}{FHD}{free Hamming distance}
\newacronym[]{mfhd}{MFHD}{maximum free Hamming distance}
\newacronym[]{ods}{ODS}{optimal distance spectrum}
\newacronym[]{iud}{i.u.d.}{independent and uniformly distributed}
\newacronym[]{ud}{u.d.}{uniformly distributed}
\newacronym[]{iid}{i.i.d.}{independent, identically distributed}
\newacronym[]{ami}{AMI}{accumulated mutual information}
\newacronym[]{bico}{BICO}{binary-input continuous-output}
\newacronym[]{gh}{GH}{Gauss--Hermite}
\newacronym[]{gum}{GUM}{Gaussian--uniform mixture}
\newacronym[\glsshortpluralkey=BSs,\glslongpluralkey=base-stations]{bs}{BS}{base-station}
\newacronym[\glsshortpluralkey=MSs,\glslongpluralkey=mobile-stations]{ms}{MS}{mobile-stations}
\newacronym[]{phy}{PHY}{physical layer} 
\newacronym[]{rlc}{RLC}{Radio-Link control} 
\newacronym[]{ran}{RAN}{Radio Access Network} 
\newacronym[]{llc}{LLC}{logical link control} 
\newacronym[]{tcp}{TCP}{transmission control protocol} 
\newacronym[]{mac}{MAC}{media access control} 
\newacronym[]{fft}{FFT}{fast Fourier transform} 
\newacronym[]{ft}{FT}{Fourrier transform}
\newacronym[]{cf}{CF}{characteristic function} 
\newacronym[]{mgf}{MGF}{moment generating function} 
\newacronym[]{ee}{EE}{energy efficiency} 
\newacronym[]{eb}{EB}{energy per bit}
\newacronym[]{kkt}{KKT}{Karush--Kuhn--Tucker} 
\newacronym[]{mcs}{MCS}{modulation/coding scheme} 
\newacronym[]{fec}{FEC}{forward error correction}
\newacronym[]{arq}{ARQ}{automatic repeat request}
\newacronym[]{harq}{HARQ}{hybrid ARQ}
\newacronym[]{tarq}{TARQ}{truncated HARQ}
\newacronym[]{ir}{IR}{incremental redundancy}
\newacronym[]{rpr}{RR}{repetition redundancy}
\newacronym[]{rrharq}{RR-HARQ}{repetition redundancy HARQ}
\newacronym[]{irharq}{IR-HARQ}{incremental redundancy HARQ}
\newacronym[]{ack}{ACK}{positive acknowledgment}
\newacronym[]{nack}{NACK}{negative acknowledgment}
\newacronym[]{hol}{HoL}{head of the line}
\newacronym[]{crc}{CRC}{cyclic redundancy check}
\newacronym[]{dp}{DP}{dynamic programming}
\newacronym[]{gp}{GP}{geometric programming}
\newacronym[]{per}{PER}{packet error rate}
\newacronym[]{ber}{BER}{bit error rate}
\newacronym[]{op}{OP}{outage probability}
\newacronym[]{spa}{SPA}{saddle-point approximation}
\newacronym[]{mrc}{MRC}{maximum ratio combining}
\newacronym[]{mdp}{MDP}{Markov decision process}
\newacronym[]{lp}{LP}{linear programming}
\newacronym[]{pomdp}{POMDP}{partially observable Markov decision process}
\newacronym[]{psimdp}{PSI-MDP}{partial state information Markov decision process}
\newacronym[]{scpp}{SCPP}{stochastic shortest path problem}
\newacronym[]{forw}{frwd}{forward}
\newacronym[]{feed}{fdbk}{feedback}
\newacronym[]{mm}{MM-HARQ}{multi-message HARQ}
\newacronym[]{xp}{XP-HARQ}{cross-packet HARQ}
\newacronym[]{ts}{TS}{time-sharing}
\newacronym[]{sc}{SC}{superposition coding}
\newacronym[]{sbrq}{SBRQ}{systematic backward retransmission}
\newacronym[]{brq}{BRQ}{backward retransmission}
\newacronym[]{lharq}{L-HARQ}{layer-coded HARQ}
\newacronym[]{anlharq}{AoN-HARQ}{all-or-none L-HARQ}
\newacronym[]{vlharq}{VL-HARQ}{variable-length HARQ}
\newacronym[]{pp}{PPP}{point process}
\newacronym[]{ppp}{PPP}{Poisson point process}
\newacronym[]{pgfl}{PGFL}{Poisson point process}
\newacronym[]{fide}{FIDE}{F\'ed\'eration Internationale des \'Echecs}
\newacronym[]{fifa}{FIFA}{F\'ed\'eration Internationale de Football Association}
\newacronym[]{epl}{EPL}{English Premier Ligue}
\newacronym[]{nhl}{NHL}{National Hockey Ligue}
\newacronym[]{nfl}{NFL}{National Football Ligue}
\newacronym[]{sg}{SG}{stochastic gradient}
\newacronym[]{lms}{LMS}{least mean squares}
\newacronym[]{rls}{RLS}{recursive least squares}
\newacronym[]{vss}{VSS}{variable step-size}
\newacronym[]{hfa}{HFA}{home-field advantage}
\newacronym[]{mov}{MOV}{margin of victory}
\newacronym[]{ac}{AC}{Adjacent Categories}
\newacronym[]{cl}{CL}{Cumulative Link}
\newacronym[]{skf}{SKF}{Simplified Kalman Filter}
\newacronym[]{vskf}{vSKF}{\emph{vector-covariance} Simplified Kalman Filter}
\newacronym[]{sskf}{sSKF}{\emph{scalar-covariance} Simplified Kalman Filter}
\newacronym[]{fskf}{fSKF}{\emph{fixed-variance} Simplified Kalman Filter}
\newacronym[]{kf}{KF}{Kalman Filter}
\newacronym[]{tpb}{TPB}{tensor-product-basis}
\newtheorem{proposition}{Proposition}
\begin{document}


\title{Simplified Kalman filter for online rating: \\one-fits-all approach}

\author{Leszek Szczecinski\thanks{L. Szczecinski is with 
Institut National de la Recherche Scientifique, Montreal, Canada. e-mail: leszek@emt.inrs.ca.} 
~ and 
Rapha\"elle Tihon\thanks{%
R. Tihon is with
University of Montreal, Canada. e-mail: raphael.tihon@umontreal.ca.}
}%


\maketitle

\begin{abstract}
In this work, we deal with the problem of rating in sports, where the skills of the players/teams are inferred from the observed outcomes of the games. Our focus is on the online rating algorithms which estimate the skills after each new game by exploiting the probabilistic models of the relationship between the skills and the game outcome. We propose a Bayesian approach which may be seen as an approximate Kalman filter and which is generic in the sense that it can be used with any skills-outcome model and can be applied in the individual- as well as in the group-sports. We show how the well-know algorithms (such as the Elo, the Glicko, and the TrueSkill algorithms) may be seen as instances of the one-fits-all approach we propose. In order to clarify the conditions under which the gains of the Bayesian approach over the simpler solutions can actually materialize, we critically compare the known and the new algorithms by means of numerical examples using the synthetic as well as the empirical data.
\end{abstract}


\section{Introduction}\label{Sec:Introduction}

The rating of the players\footnote{We will talk about \emph{players} but the team sports are of course treated in the same way. In fact, the examples we provide come from team-sports but the advantage of talking about players is that it allows us to discuss the issue of gathering players in groups which face each other, as done \eg in eSports \cite{Herbrich06}.} 
is one of the fundamental problem in sport analytics and consists in assigning each player a real value called a \emph{skill}. In this work we are interested in the rating algorithms that can be systematically derived from the probabilistic models which describe i)~how the the skills affect the outcomes of the games, as well as ii)~how the skills  evolve in time, \ie characterize the skills \emph{dynamics}. Using the probabilistic models, the forecasting of the game outcomes is naturally derived from the rating. 

Once the models are chosen, the rating boils down to inferring the unknown skills from the observed games outcomes. This is essentially a parameter estimation problem which  has been largely addressed in the literature. In particular, using a static model for the skills, that is, assuming that the skills do not vary in time, the problem consists in solving a non-linear regression problem and the main issue then is to define a suitable skills-outcome model. 

The most popular skills-outcome models are obtained from the pairwise comparison framework which is well known in the psychometrics literature \cite{David63_Book}, \cite{Cattelan12}.  For binary games (win/loss), the Thurston model \cite{Thurston27} and Bradley-Terry model \cite{Bradley52} are the most popular. Their extensions to the ternary games (win/loss/draw) were proposed in \cite{Rao67} and \cite{Davidson70}; these are particular cases of ordinal variable models  \cite{Agresti92} that may may be applied in multi-level games as done \eg in \cite{Fahrmeir94} and \cite{Goddard05}.

Alternative approach focuses on modelling directly the game points (\eg goals) using predefined distributions; the Poisson distribution is the most popular in this case \cite{Maher82}; similarly, the points difference can be modelled using, \eg the Skellam, \cite{Karlis08} or the Weibull \cite{Boshnakov17} distributions.

The very meaning of the skills may be also redefined and instead of a scalar, the player may be assigned two values corresponding to offensive and defensive skills \cite{Maher82}, \cite{Manderson18}, \cite{Wheatcroft20a}; further, considering the \gls{hfa}, three or four distinct parameters per player may be defined \cite{Maher82}, \cite{Kuk95}, although recent results indicate that this may lead to over-fitting \cite{Ley19}, \cite{Lasek20}.

The various skills-outcome model we mention above, invariably assume that the outcome of the game depends (via a non-linear function) on a linear combination of the skills of the participating players, this assumption is  also used in the case when the multiple players are gathered in two groups facing each other in a game \cite{Herbrich06}. This general approach  will be also used as a basis for our work.

Each of these skills-outcome models can be combined with the models describing how the skills evolve in time. With that regard, the most popular is modelling of the skills via first-order Markov Gaussian processes, \eg \cite{Fahrmeir92}, \cite{Glickman93_thesis}, \cite{Fahrmeir94}, \cite{Glickman99}, \cite{Knorr00}, \cite{Held05}, \cite{Herbrich06}, \cite{Koopman12}, \cite{Manderson18}, \cite{Koopman19}. This formulation is then exploited to derive the  on-line rating algorithms in two recursive steps: first, at a time $t$, the posterior distribution of the skills is found using all observations up to time $t$; more precisely, to simplify the problem, the Gaussian approximation of the latter is obtained. Next, the posterior distribution from the time $t$, is used as a prior in the time $t+1$. This approach should be seen as a generalization of  the Kalman filtering to the non-Gaussian models characteristic of the rating problems \cite{Fahrmeir92}.

Most of the works we cited above and which consider the skills' dynamics, focused on the estimation of the skills with a moderate number of players, the case which is typical in sport leagues (\eg in football, hockey, American football, etc). In such a case, the approximate (Gaussian) posterior  distribution of the skills can be fully defined by the mean vector and by the covariance matrix. 

On the other hand, for large number of players (\eg thousands of chess players or millions of eSports players), it is considered unfeasible and further approximations are introduced by considering only a diagonal covariance matrix, equivalent to assuming that the skills are, a posteriori, Gaussian and independent. This approach was proposed to rate the chess players (the Glicko algorithm \cite{Glickman99}) as well as for the rating in eSports (the TrueSkill algorithm, \cite{Herbrich06}).


However, the Glicko and the TrueSkill algorithms are derived i) from different skills-outcome models,\footnote{Glicko uses the Bradley-Terry model \cite{Bradley52}, while TrueSkill uses the Thurston model \cite{Thurston27}.} and ii) using different approximation principles. Thus, not only it may be difficult to see them as instances of a more general approach but, more importantly, they cannot be straightforwardly reused to obtain new online ratings if we would like to change the skills-outcome model.

This latter fact stays very much in contrast with the approach of \cite{Fahrmeir92} which is general in its formulation so, under mild conditions, it can be applied to any skills-outcomes model. However, since the focus of \cite{Fahrmeir92} and other works which followed its path, was not on the large problems, the derivations did not leverage the simplifying assumptions on which rely the TrueSkill and Glicko algorithms.

In our work we thus want to take advantage of both worlds: we will exploit the independence assumption on which  \cite{Glickman99} and \cite{Herbrich06} are built, and the estimation principle underlying the Kalman filter which was used in \cite{Fahrmeir92}. Furthermore, we will also consider new simplifying assumptions about the posterior distributions of the skills; this will lead to different simplified versions of the Kalman filter.

The goal of this work is thus threefold:
\begin{itemize}
    \item We will show how the online rating algorithms can be derived for \emph{any} skills-outcome model and may be applied equally well to estimate the skills of the players in individual sports (as in the Glicko algorithms) or the skills of the players withing a group (as in the True-Skill algorithm). We will  also consider different level of simplification when dealing with the skills' dynamics. 
    \item Using this generic algorithmic framework, we will be able not only to derive new algorithms, but also to compare and understand the similarities/differences between the known online rating such as the Elo \cite{Elo08_Book}, the Glicko \cite{Glickman99}, or the TrueSkill \cite{Herbrich06} algorithms.
    \item By mean of numerical examples, we will provide an insight into the advantages of the simplified versions of the rating algorithms, and indicate under which conditions the simple rating algorithm may perform equally well as the more complex ones.
\end{itemize}

The paper is organized as follows:  the model underlying the rating is shown in \secref{Sec:Model}. The on-line rating algorithms are derived in a general context in \secref{Sec:Tracking} using different approximations of the posterior distribution. In \secref{Sec:New.Ratings}, the  popular scalar skills-outcome model are used to derive new rating algorithms which are then compared to the algorithms from the literature.  Numerical examples are shown in \secref{Sec:Num.results} and conclusions are draws in \secref{Sec:Conclusions}.

\section{Model}\label{Sec:Model}
We consider the case when the players indexed with $m\in\set{1,\ld, M}$ participate in the games indexed with $t\in\set{1,\ld, T$}, where the number of games, $T$, is finite (as in sport seasons) or infinite (as in non-stop competitions, such as eSports). 

We consider one-on-one games: in the simplest case it means that the ``home'' player $i_t$ plays against the ``away'' player $j_t$, where $i_t, j_t\in\set{1,\ld,M}$. The outcomes of the game $t$, denoted by $y_t$, belong to an ordinal set $\mcY$. For example, if $y_t$ is the difference between the game-points (such as goals), we have  $\mcY=\set{\ld-3,-2,-1,0,1,2,\ld}$ so the ordinal variables are naturally encoded into integers. On the other hand, in ternary-outcome games, we may assign $y_t=0$ if the player $i_t$ loses the game, $y_t=2$ if  she wins the game, and $y_t=1$ if the game ends in a draw, \ie $\mcY=\set{0,1,2}$. The very notion of the home/away players is useful when dealing with the \acrfull{hfa} typically encountered in sports but it also helps us to ground the meaning of the game outcome: even in the absence of the \gls{hfa}, for the outcome $y_t=0$ to be meaningful, we must decide which player lost (the home player in our notation for ternary-outcome games).

In a more general setup, the game may implicate two \emph{groups} of players whose indices are defined by the sets $\mcI_t=\set{i_{t,1}, i_{t,2},\ld, i_{t,F}}$ (these are indices of the ``home'' players) and $\mcJ_t=\set{j_{t,1}, j_{t,2},\ld, j_{t,F}}$ (indices of the ``away'' players). While the number of players in each group, $F$, is assumed to be constant, this is done merely to simplify the notation and other cases are possible. For example, the groups in eSports are formed on the fly and they do not  always have the same number of players, nor even the same number of players in both groups playing against each other \cite{Herbrich06}.  The process of defining which players take part in the game $t$, \ie how the indices $i_t$, $j_t$ (or the sets $\mcI_t$ and $\mcJ_t$) are defined, is called a \emph{scheduling}.

The above notation applies directly if we replace the notion of ``player'' with ``team'';  but then, of course, the general case of groups  defined by $\mcI_t$ and $\mcJ_t$ is not necessary. For the rest of the work we only refer to players which is a more general case to deal with.

In its simplest form, the rating consists in assigning the player $m$ the value $\theta_{t,m}$, called a \emph{skill}; this is done after observing the outcomes of the games up to time $t$, which we denote as $\un{y}_t=\set{\un{y}_{t-1},y_t}$. We index the skills with $t$ because we assume that they may vary in time. 

The probabilistic perspective we will adopt relies on the model relating the skills $\btheta_t=[\theta_{t,1},\ld, \theta_{t,M}]\T$ to $\un{y}_t$; it comprises i) the skills-outcome model, which makes explicit the relationship  between the skills $\btheta_t$ and the outcome $y_t$ at time $t$,  as well as ii) the model describing the evolution of the skills in time, \ie the skills' dynamics. 

The problem of finding the skills-outcome model has been treated extensively in the literature often exploitng the link with the problem of pairwise comparison well studied in psychometry \cite{Thurston27}, \cite{Bradley52}. This is also where most of the efforts are concentrated in the rating literature and many models have been alrady proposed and studied. On the other hand, the modelling of the dynamics of the skills is less diversified and mainly focuses on applying the particular skills-outcome model in the dynamic context.

~

\textbf{Skills-outcome model}

The skills-outcome model defines the probability of the outcome conditioned on the skills, $\PR{y_t|\btheta_t}$, and  most often is defined by combing the non-linear scalar function and the linear function of the skills, \ie
\begin{align}\label{pdf.y.theta}
\PR{y_t|\btheta_t}&=L(z_t/s; y_t),\\
\label{z.t}
z_t&=\bx_{t,\tr{h}}\T\btheta_t -\bx_{t,\tr{a}}\T\btheta_t = \bx\T_t\btheta_t,
\end{align}
where the role of $s>0$ is to scale the values the skills; $\bx_{t,\tr{h}}=[x_{t,\tr{h},1},\ld, x_{t,\tr{h},M}]\T$ is the home scheduling vector, defined as follows: $x_{t,\tr{h}, m}=1$ if the player $m$ is a home player, \ie $m\in\mcI_t$, and $x_{t,\tr{h},m}=0$, otherwise. The away scheduling vector is defined correspondingly for the away players.

For example, if $M=10$, and 
$\bx_{t,\tr{h}} = [ 0, 0, 0, 1, 0, 0, 1, 0, 0, 0]\T$, and $\bx_{t,\tr{a}} = [ 0, 0, 1, 0, 0, 0, 0, 0, 0, 1]\T$, it means that, 
at time $t$, the game involves the home players $\mcI_t=\set{4,7}$ and away players $\mcJ_t=\set{3,10}$. Then, the combined scheduling vector $\bx_t=\bx_{t,\tr{h}}-\bx_{t,\tr{a}}$ is given by $\bx_{t} = [ 0, 0, -1, 1, 0, 0, 1, 0, 0, -1]\T$.

Of course, we do not suggest that the vector product(s) in \eqref{z.t} should be actually implemented; it is just a convenient notation expressing the fact that $z_t$ is the difference between the sum of the skills of the home players and the sum of the skills of the away players. 

As for the function $L(z;y)$ it should be defined taking into account the structure of the space of outcomes $\mcY$. For example, in the binary games, $\mcY=\set{0,1}$, we often use
\begin{align}\label{L.example}
L(z;y)=
\begin{cases}
F(z) & \tr{if}\quad y=1 \quad \tr{(home win)}\\
F(-z) & \tr{if}\quad y=0 \quad \tr{(away win)}
\end{cases},
\end{align}
where $0\le F(z)\leq 1$ is a non-decreasing function. This corresponds to the assumption that increasing the difference between the skills, $z_t$, corresponds to the increased probability of the home win and, of course, decreased probability of the away win. More on that in \secref{Sec:New.Ratings}.

~

\textbf{Skills' dynamics}

The temporal evolution of the skills is often modelled as a damped random walk
\begin{align}\label{btheta.t.t1}
    \btheta_{t} =\beta_{t} \btheta_{t-1} + \bu_{t}\epsilon_{t},
\end{align}
where $\bu_t$ is the vector comprised of independent, zero-mean, unitary-variance, random Gaussian variables,  so
$\epsilon_{t}$  has the meaning of the variance of the random increment in skills from time $t-1$ to $t$: it is assumed to be the same  for all the player.  For example, \cite{Glickman99} uses 
\begin{align}\label{epsilon.t}
\epsilon_{t}= \big(\tau(t)-\tau(t-1)\big)\epsilon,
\end{align}
where $\tau(t)$ is the time (\eg measured in days) at which the game indexed with $t$ is played, and $\epsilon$ is the per-time unit increase of the variance.

The autoregression parameter $0<\beta_t\le 1$ models the decrease of the skills in time (in absence of game outcomes). While $\beta_{t}=1$ is used in most of the previous works we cite, $\beta_{t}<1$ was also used, \eg in \cite{Koopman12}, \cite{Manderson18}, \cite{Koopman19}, and to take into account the time we may define it as
\begin{align}\label{beta.t}
\beta_{t}= \beta^{(\tau(t)-\tau(t-1))},
\end{align}
where, again $\beta$ is the per-time decrease of the skills.

The relationship \eqref{btheta.t.t1} may be presented as
\begin{align}\label{dumped.Markov}
\pdf( \btheta_{t} | \btheta_{t-1} ) = \mcN(\btheta_t; \beta_t \btheta_{t-1},  \bI \epsilon_{t}),
\end{align}
where $\bI$ is the identity matrix and
\begin{align}\label{pdf.Normal}
\mcN( \btheta;  \bmu , \bV )=\frac{1}{\sqrt{\tr{det}(2\pi\bV)}}\exp\left( -\frac{1}{2} (\btheta-\bmu)\T\bV^{-1}(\btheta-\bmu)   \right)
\end{align}
is the multivariate Gaussian \gls{pdf} with the mean vector $\bmu$ and the covariance matrix $\bV$. 

As an alternative to \eqref{dumped.Markov} we may also assume that the skills in $\btheta_{t}$ (conditioned on $\btheta_{t-1}$) are correlated, \ie the covariance matrix has non-zero off-diagonal elements. This is done, \eg in \cite{Knorr00}, \cite{Manderson18} in order to ensure that $\bone\T\btheta_{t}=\tr{Const.}$, \ie that the skills at a given time $t$, sum up to the same constant. 

However, a direct consequence of the correlation between the skills is that, the outcome $y_t$ will affect not only the skills of the players involved in the game at time $t$ but also of all other players. This may result in rating which is slightly counter-intuitive but also, in the case of eSports, when the pool of the players is not predefined, this model may be difficult to justify. 

We will thus use the model \eqref{dumped.Markov} but we emphasize that our goal is not to justify particular assumptions underlying the skills-outcome model or the models for the skills' dynamics. We rather want to present a common framework which will i) show the relationship between the algorithms already known from the literature, and ii) allow us to create new online rating algorithms in a simple/transparent manner.

\section{Estimation of the skills}\label{Sec:Tracking}

If we suppose momentarily that the skills do not vary in time, \ie $\btheta_{t}=\btheta$ (or $\epsilon=0$), the problem of finding the skills may be formulated under the \gls{ml} principle
\begin{align}\label{ML.estimation}
    \hat{\btheta}&=\argmax_{\btheta} \PR{ \un{y}_T | \btheta }
    =\argmax_{\btheta} \prod_{t=1}^{T}  L(z_t/s; y_t)\\
    &=\argmax_{\btheta} \sum_{t=1}^{T}  \ell(z_t/s;y_t),
\end{align}
where $\ell(z_t/s; y_t)=\log L(z_t/s; y_t)$ is the log-likelihood, and we assumed that the observations $y_t$, when conditioned on the skills' difference, $z_t$, are independent.

The solution \eqref{ML.estimation} can be found uniquely if the log-likelihood $\ell(\bx_{t}\T\btheta/s; y_t)$ is a concave function of $\btheta$ which holds if $\ell(z; y_t)$ is concave in $z$. This is the ``mild'' condition we referred to in \secref{Sec:Introduction} and, in the rest of the work, we assume that this condition is satisfied.\footnote{For that, it is necessary that $\forall z, L''(z;y)L(z;y)\le [L'(z;y)]^2$.}

Unlike the \gls{ml} approach which finds a \emph{point} estimate of the skills $\hat\btheta$, the  Bayesian approach consists in finding the posterior \emph{distribution} of the skills $\pdf( \btheta | \un{y}_T )$. But, because finding the distributions is usually intractable, they are often assumed to belong to a particular parametrically defined family and the Gaussian approximation is often adopted
\begin{align}\label{gauss.static.posterior}
    \pdf( \btheta | \un{y}_T )\approx \mcN(\btheta; \hat{\bmu}, \hat{\bV}), 
\end{align}
where, to find $\hat{\bmu}$ we should calculate the mean from the posterior distribution $\pdf( \btheta | \un{y}_T )$. This also may be difficult, so we may prefer to set $\hat{\bmu}=\hat{\btheta}$, where $\hat\btheta$ is the mode of the distribution $\pdf( \btheta | \un{y}_T )\propto \Pr\{ \un{y}_T |\btheta \}\pdf(\btheta)$, and where $\pdf(\btheta)$ reflect a priori knowledge about $\btheta$. For the non-informative prior $\pdf(\btheta)$ the mode $\hat\btheta$ coincides with \eqref{ML.estimation}. 

While the problem of finding the posterior distribution of the skills is more general that finding the point estimate of the skills, with the model \eqref{gauss.static.posterior}, the mean $\hat\bmu$ may be treated as the \gls{map} point estimate, and the covariance $\hat\bV$ expresses the uncertainity of the \gls{map} estimation.

\subsection{Online rating}
The Bayesian approach to the online rating consists in finding the distribution of the skills $\btheta_t$ conditioned on the games' outcomes $\un{y}_t$, \ie 
\begin{align}\label{theta.aposteriori}
\pdf(\btheta_t|  \un{y}_{t} )&=
\pdf(\btheta_t|  \un{y}_{t-1}, y_t )
\propto 
\PR{y_t|\btheta_t}
\int  \pdf(\btheta_t , \btheta_{t-1}|  \un{y}_{t-1}, ) \dd \btheta_{t-1}\\
\label{theta.aposteriori.2}
&=\PR{y_t|\btheta_t}
\int  \pdf(\btheta_t|\btheta_{t-1})\pdf(\btheta_{t-1}|  \un{y}_{t-1}) \dd \btheta_{t-1},
\end{align}
where we exploited the Markovian property of \eqref{dumped.Markov}, \ie  the knowledge of $\btheta_{t-1}$ is sufficient to characterize the distribution of $\btheta_{t}$. 

The relationship  \eqref{theta.aposteriori.2} allows us to calculate the distribution $\pdf(\btheta_{t}|  \un{y}_{t})$ recursively, \ie from $\pdf(\btheta_{t-1}|  \un{y}_{t-1})$. This is what the online rating is actually about: as soon as the game outcomes become available, we estimate the (distribution of the) skills by exploiting the previously obtained estimation results. Such recursive calculation of the posterior distribution from \eqref{theta.aposteriori.2} has been already dealt with, \eg in \cite{Fahrmeir92}, \cite{Fahrmeir94} which also recognized that the formulation \eqref{theta.aposteriori.2} underlies the well known Kalman filtering \cite[Ch.~12-13]{Moon00_Book}. 

In order to make \eqref{theta.aposteriori.2} tractable, \cite{Fahrmeir92} (and many works that followed) rely on a Gaussian parametric representation of  $\pdf(\btheta_{t}|  \un{y}_{t})$, akin to \eqref{gauss.static.posterior}, \ie  $\tilde\pdf(\btheta_{t}|  \un{y}_{t})=\mcN(\btheta_{t}; \bmu_t, \bV_t)$ which allows us to implement the approximate version of \eqref{theta.aposteriori.2} as
\begin{align}\label{theta.aposteriori.tilde}
\hat \pdf(\btheta_{t}|  \un{y}_{t}) 
&=
\PR{y_t|\btheta_t}
\int  \pdf(\btheta_{t}|\btheta_{t-1})\tilde{\pdf}(\btheta_{t-1}|  \un{y}_{t-1}) \dd \btheta_{t-1},\\
\label{project.aposteriori}
\tilde \pdf(\btheta_{t}|  \un{y}_{t} )
&\propto
\mcP\left[\hat \pdf(\btheta_{t}|  \un{y}_{t}) \right],
\end{align}
where $\mcP[\pdf(\btheta)]$ is the operator projecting $\pdf(\btheta)$ on the space of Gaussian distribution, of which we consider the following possible forms with varying degree of simplification
\begin{align}\label{covariance.cases}
    \tilde\pdf(\btheta_t|\un{y}_t)=
    \begin{cases}
    \mcN(\btheta_t, \bmu_t, \bV_t) & \text{matrix-covariance model}\\
    \mcN(\btheta_t, \bmu_t, \tr{diag}(\bv_t)) & \text{vector-covariance model}\\
    \mcN(\btheta_t, \bmu_t, v_t\bI) & \text{scalar-covariance model}
    \end{cases},
\end{align}
where $\tr{diag}(\bv)$ is the diagonal matrix with diagonal elements gathered in the vector $\bv$. The vector- and scalar-covariance models are particularly suited for online rating when the number of players, $M$, is large, because we only need to estimate a vector/scalar instead of $M\times M$ covariance matrix. The vector-covariance model is the basis for the derivation of the TrueSkill and Glicko algorithms. 

The scalar-covariance model may be found in \cite{Ingram21} (with an additional assumption of the variance being constant) and is justified in the sports where players are uniformly scheduled to play throughout the season. Then, at any point of time, the number of games played is similar for all players so we can expect that the uncertainty (expressed by the variance) be similar for all the posterior estimates. On the other hand, in eSports, there may be significant differences between the number of games played by different players; in particular, the players who are new to the game should be characterised by a larger uncertainty then those who have been playing for a long time, and thus the scalar-covariance model may be inadequate.

The projection in \eqref{project.aposteriori} is done by finding $\tilde\pdf(\btheta_t|\un{y}_t)$ which minimizes the \gls{kl} distance to the projection argument $\hat\pdf(\btheta_t|\un{y}_t)$; this is done using the following.
\begin{proposition}\label{Prop:DKL}
The parameters of the distribution \eqref{covariance.cases},  $\tilde\pdf(\btheta_t|\un{y}_t)$, closest to  $\hat\pdf(\btheta_t|\un{y}_t)$ (in the sense of the \gls{kl} distance), should be set as follows:
\begin{align}
    \label{bmu.yt}
    \bmu_t&=\Ex[ \btheta_t |\un{y}_t]\\
    \label{bV.yt}
    \bV_t&=\Ex[(\btheta_t-\bmu_t)(\btheta_t-\bmu_t)\T |\un{y}_t]\\
    \label{bv.t.DKL}
    \bv_t&= \tnr{di}(\bV_t)\\
    \label{v.t.DKL}
    v_t & =\frac{1}{M}\bone\T\bv_t,
\end{align}
where $\tnr{di}(\bV)$ extracts the diagonal from the matrix $\bV$, and $v_t$ in \eqref{v.t.DKL} calculates the arithmetic average of the elements in $\bv_t$.
\begin{proof} 
\appref{Proof:DKL} 
\end{proof}
\end{proposition}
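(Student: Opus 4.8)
The plan is to read the projection \eqref{project.aposteriori} as a moment-matching (\(M\)-)projection: the relevant direction of the \gls{kl} divergence here is \(D\big(\hat\pdf\,\|\,\tilde\pdf\big)=\int\hat\pdf(\btheta_t|\un y_t)\,\log\frac{\hat\pdf(\btheta_t|\un y_t)}{\tilde\pdf(\btheta_t|\un y_t)}\,\dd\btheta_t\), and \(\tilde\pdf(\btheta_t|\un y_t)\) is its minimizer over \(\tilde\pdf\) ranging in the appropriate Gaussian family from \eqref{covariance.cases}. Since the differential entropy of \(\hat\pdf\) does not depend on \(\tilde\pdf\), this is the same as maximizing the cross-entropy \(\int\hat\pdf(\btheta_t|\un y_t)\log\tilde\pdf(\btheta_t|\un y_t)\,\dd\btheta_t\), and by \eqref{pdf.Normal} the integrand \(\log\tilde\pdf\) is an explicit polynomial of degree at most two in \(\btheta_t\). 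Throughout I assume \(\hat\pdf(\btheta_t|\un y_t)\) has a well-defined mean and a positive-definite covariance, which is consistent with the log-concavity invoked in \secref{Sec:Tracking}.

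The key step is a Pythagorean-type identity that removes \(\hat\pdf\) from the optimization. Let \(\btheta^\star=\Ex[\btheta_t|\un y_t]\) and \(\bV^\star=\Ex[(\btheta_t-\btheta^\star)(\btheta_t-\btheta^\star)\T|\un y_t]\) be the posterior mean and covariance and let \(q^\star=\mcN(\btheta_t;\btheta^\star,\bV^\star)\). For any Gaussian \(\tilde\pdf=\mcN(\btheta_t;\bmu,\bV)\), the log-ratio \(\log\frac{q^\star}{\tilde\pdf}\) is a quadratic function of \(\btheta_t\), so its expectation under \(\hat\pdf(\cdot|\un y_t)\) depends on \(\hat\pdf\) only through its first two moments, which coincide with those of \(q^\star\); hence \(\Ex_{\hat\pdf}\!\big[\log\tfrac{q^\star}{\tilde\pdf}\big]=\Ex_{q^\star}\!\big[\log\tfrac{q^\star}{\tilde\pdf}\big]=D(q^\star\|\tilde\pdf)\), and therefore \(D(\hat\pdf\|\tilde\pdf)=D(\hat\pdf\|q^\star)+D(q^\star\|\tilde\pdf)\). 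The first term is independent of \(\tilde\pdf\), so the projection reduces to minimizing the Gaussian-to-Gaussian divergence \(D(q^\star\|\tilde\pdf)\) over the admissible family, for which the closed form \(2D\big(\mcN(\bmu_1,\bV_1)\|\mcN(\bmu_2,\bV_2)\big)=\tr{tr}(\bV_2^{-1}\bV_1)+(\bmu_1-\bmu_2)\T\bV_2^{-1}(\bmu_1-\bmu_2)-M+\log\frac{\tr{det}(\bV_2)}{\tr{det}(\bV_1)}\) is available, where \(\tr{tr}\) is the trace.

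The remaining steps are the finite-dimensional minimizations. In every case \(\bmu\) enters \(D(q^\star\|\tilde\pdf)\) only through the nonnegative term \((\btheta^\star-\bmu)\T\bV^{-1}(\btheta^\star-\bmu)\), which is killed by \(\bmu=\btheta^\star\), giving \eqref{bmu.yt}. For the matrix-covariance model, setting the \(\bV\)-gradient of \(\tr{tr}(\bV^{-1}\bV^\star)+\log\tr{det}(\bV)\) to zero yields \(\bV^{-1}=\bV^{-1}\bV^\star\bV^{-1}\), i.e.\ \(\bV=\bV^\star\), which is \eqref{bV.yt}. For the vector-covariance model, substituting \(\bV=\tr{diag}(\bv)\) decouples the objective into \(\sum_{m}\big(V^\star_{mm}/v_m+\log v_m\big)\), each term minimized at \(v_m=V^\star_{mm}\), i.e.\ \(\bv_t=\tnr{di}(\bV^\star)\), which is \eqref{bv.t.DKL}. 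For the scalar-covariance model, substituting \(\bV=v\bI\) leaves \(\tr{tr}(\bV^\star)/v+M\log v\), minimized at \(v=\tr{tr}(\bV^\star)/M=\frac1M\bone\T\bv_t\), which is \eqref{v.t.DKL}.

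The one point that needs care beyond routine calculus is arguing that these stationary points are the \emph{global} minimizers rather than mere critical points, and this is precisely where the Pythagorean decomposition pays off: \(D(q^\star\|\tilde\pdf)\ge 0\) with equality only for \(\tilde\pdf=q^\star\) settles the unconstrained (matrix) case at once, and for the constrained cases the reduced objectives are strictly convex in the natural variables (\(\bV^{-1}\), respectively the coordinates \(1/v_m\) or \(1/v\)) and blow up as the covariance parameters approach the boundary of the positive-definite cone, so the interior stationary point is the unique minimizer. It is worth noting in passing that the matrix-covariance statement is the classical fact that the \(M\)-projection onto the Gaussian family is moment matching, and that the vector- and scalar-covariance results are its specialization to the constrained sub-families.
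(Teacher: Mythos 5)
Your proposal is correct and arrives at exactly the minimizations the paper performs, but it gets there by a slightly different route. The paper works directly with $D_{\tnr{KL}}(\hat\pdf\|\tilde\pdf)$, expands it into $\frac{1}{2}\log\det(2\pi\bV)+\frac{1}{2}\Ex_{\hat\pdf}[(\btheta-\bmu)\T\bV^{-1}(\btheta-\bmu)]$, and zeroes first-order conditions in $\bmu$, $v_m$, and $v$ for each of the three families (citing the full-covariance case as well known). You insert an intermediate Pythagorean decomposition $D(\hat\pdf\|\tilde\pdf)=D(\hat\pdf\|q^\star)+D(q^\star\|\tilde\pdf)$, where $q^\star$ is the Gaussian matching the first two posterior moments; this is valid because $\log(q^\star/\tilde\pdf)$ is quadratic in $\btheta_t$, so its expectation under $\hat\pdf$ equals that under $q^\star$. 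What this buys you is twofold: the matrix-covariance case \eqref{bV.yt} follows immediately from $D(q^\star\|\tilde\pdf)\ge 0$ with equality iff $\tilde\pdf=q^\star$, with no matrix calculus needed, and you get a clean global-optimality argument (strict convexity in the precision variables plus blow-up at the boundary of the positive cone) that the paper's derivative-zeroing leaves implicit. The cost is that you must invoke the closed-form Gaussian-to-Gaussian divergence, whereas the paper's expansion is self-contained and shorter. The reduced objectives you minimize for the vector and scalar families — $\sum_m(V^\star_{mm}/v_m+\log v_m)$ and $\mathrm{tr}(\bV^\star)/v+M\log v$ — are, up to a factor of $1/2$, identical to the paper's \eqref{DKL.vector} and \eqref{DKL.scalar}, so the endgame coincides. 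Both arguments implicitly assume the direction of the \gls{kl} divergence is from $\hat\pdf$ to $\tilde\pdf$ (the moment-matching, or $M$-projection, direction); you state this explicitly, which is a useful clarification since the proposition's phrase ``closest in the sense of the \gls{kl} distance'' is ambiguous and the conclusion would differ for the reverse direction.
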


So, to implement the projection $\mcP[\cd]$, and irrespectively which covariance model in \eqref{covariance.cases} we decide to use, we must first calculate (exactly or approximately) the mean \eqref{bmu.yt} and the covariance \eqref{bV.yt} from the distribution $\pdf(\btheta_t|\un{y}_t)$. In the case of the vector (respectively, the scalar)-covariance model, we obtain the vector $\bv_t$ (respectively, the scalar $v_t$) from the covariance matrix,  via \eqref{bv.t.DKL} (respectively, via  \eqref{v.t.DKL}). 

The algorithm based on the  matrix-covariance model will be called a \gls{kf} rating and we show it in \secref{Sec:KF}. We will use it in \secref{Sec:SKF} to show how the \acrfull{skf} rating, based on the vector/scalar-covariance models may be obtained.

\subsection{Kalman filter}\label{Sec:KF}

The integral in \eqref{theta.aposteriori.tilde} is calculated from  \eqref{pdf.Normal} and \eqref{dumped.Markov} as\footnote{We use the relationship $\mcN(\btheta;\bmu_1,\bV_1)\mcN(\btheta; \bmu_2,\bV_2)=\mcN(\btheta;\bmu_3,\bV_3)\mcN(\bmu_1;\bmu_2;\bV_1+\bV_2)$ \cite[Ch.~8.4]{Barber12_Book}.}
\begin{align}\label{integral.prod.matrix}
\int  \pdf(\btheta_{t}|\btheta_{t-1})\tilde{\pdf}(\btheta_{t-1}|  \un{y}_{t-1}) \dd \btheta_{t-1}
&=
\mcN(  \btheta_{t} ; \beta_{t}\bmu_{t-1} , \ov\bV_{t}  ),
\end{align}
where 
\begin{align}\label{ov.V.t}
    \ov\bV_t &= \beta_{t}^2 \bV_{t-1} +\epsilon_t \bI
\end{align}
is the covariances matrix of the skills at time $t$ estimated from the observation $\un{y}_{t-1}$.

Using \eqref{integral.prod.matrix} and \eqref{pdf.y.theta} in \eqref{theta.aposteriori.tilde} yields
\begin{align}\label{pdf.n}
\hat\pdf(\btheta_{t}|  \un{y}_{t} ) &\propto \exp\big(Q(\btheta_{t})\big)\\
\label{Q.theta}
Q(\btheta)&=\ell(\bx_{t}\T\btheta/s;y_t)  -\frac{1}{2}(\btheta-\beta_{t}\bmu_{t-1})\T \ov\bV_t^{-1}(\btheta-\beta_{t}\bmu_{t-1})
\end{align}
and thus, by finding its mode
\begin{align}\label{mean.mode.Q}
    \bmu_t = \argmax_{\btheta} Q(\btheta)
\end{align}
and the inverse of the negated Hessian, $\bV_t=[-\nabla^2_{\btheta}Q(\btheta)]^{-1}|_{\btheta=\bmu_t}$, we obtain the approximate solution to the projection 
\begin{align}
   \mcP[\hat\pdf(\btheta_{t}|  \un{y}_{t} )]=\mcN(\btheta_t; \bmu_{t}, \bV_{t}).
\end{align}

We have to solve \eqref{mean.mode.Q} and this may be done 
by replacing $\ell(\bx_{t}\T\btheta/s;y_t)$ with a quadratic approximation
\begin{align}\label{Taylor.expansion}
\tilde\ell(\bx_{t}\T\btheta/s;y_t)\approx
\ell(\bx_{t}\T\btheta_{\tr{o}}/s;y_{t})+ [\nabla_{\btheta} \ell(\bx_{t}\T\btheta_{\tr{o}}/s;y_{t})]\T\big(\btheta-\btheta_{\tr{o}}\big) +\frac{1}{2}(\btheta-\btheta_{\tr{o}})\T\nabla_{\btheta}^2 \ell(\bx_{t}\T\btheta_{\tr{o}}/s;y_{t}) \big(\btheta-\btheta_{\tr{o}}\big),
\end{align}
obtained by developing $\ell(\bx_{t}\T\btheta;y_t)$ via Taylor series around  $\btheta_{\tr{o}}$, where the
gradient and the Hessian of $\ell(\bx_{t}\T\btheta;y_t)$ are calculated as
\begin{align}\label{grad.ell}
\nabla_{\btheta} \ell(\bx_{t}\T\btheta;y_t) &= \frac{1}{s}g(\bx_{t}\T\btheta/s;y_{t})\bx_{t}\\
\nabla_{\btheta}^2 \ell(\bx_{t}\T\btheta;y_t) &= -\frac{1}{s^2}h(\bx_{t}\T\btheta/s;y_{t})\bx_{t}\bx_{t}\T,
\end{align}
with the  first- and the second derivatives of the scalar function $\ell(z;y_{t})$ denoted as
\begin{align}
 g(z;y_{t})&=\frac{\dd}{\dd z}\ell(z;y_{t}),\\
 h(z;y_{t})&=- \frac{\dd^2}{\dd z^2}\ell(z;y_{t});
\end{align}
we note that $h(z;y_t)>0$ because $\ell(z;y_{t})$ is concave in $z$.

Replacing $\ell(\cd;y_{t})$ with $\tilde\ell(\cd;y_{t})$ in \eqref{Q.theta}, the mode \eqref{mean.mode.Q} is obtained when the gradient of $Q(\btheta)$ goes to zero, \ie
\begin{align}
\nabla_{\btheta} Q(\btheta)|_{\btheta=\bmu_{t}}\approx \frac{1}{s}g(\bx_{t}\T\btheta_{\tr{o}}/s;y_{t})\bx_{t}
-\frac{1}{s^2}h(\bx_{t}\T\btheta_{\tr{o}}/s;y_{t})\bx_{t}\bx_{t}\T\big(\bmu_{t}-\btheta_{\tr{o}}\big) - \ov\bV_t^{-1}\big(\bmu_{t}-\beta_{t}\bmu_{t-1}\big) =\bzero,
\end{align}
which is solved by
\begin{align}\label{theta0.prime}
\bmu_{t} &= \bV_t\Big[ \bx_{t} \big(\frac{1}{s}g(\bx_{t}\T\btheta_{\tr{o}}/s; y_{t}) + \frac{1}{s^2}h(\bx_{t}\T\btheta_{\tr{o}}/s;y_{t})\bx_{t}\T\btheta_{\tr{o}}\big) +\ov\bV_{t}^{-1}\beta_{t}\bmu_{t-1} \Big],
\end{align}
where
\begin{align}
\bV_t & = \big[\frac{1}{s^2}h(\bx_{t}\T\btheta_\tr{o}/s;y_{t})\bx_{t}\bx_{t}\T + \ov\bV_t^{-1}\big]^{-1}\\
\label{eq:bV.full}
&=
\ov\bV_t -\ov\bV_t\bx_{t}\bx\T_{t}\ov\bV_t
\frac
{ h(\bx_{t}\T\btheta_\tr{o}/s;y_{t})}
{s^2+h(\bx_{t}\T\btheta_\tr{o}/s;y_{t})\omega_t},
\end{align}
$\omega_t=\bx\T_{t}\ov\bV_t\bx_{t}$,  
and \eqref{eq:bV.full} is obtained via matrix inversion lemma \cite[Sec.~4.11]{Moon00_Book}.

Combining \eqref{eq:bV.full} with \eqref{theta0.prime} yields
\begin{align}\label{btheta.full}
\bmu_{t}&=\beta_{t}\bmu_{t-1} + \ov\bV_t\bx_t\frac{sg(\bx_{t}\T\btheta_{\tr{o}}/s;y_{t})+h(\bx_{t}\T\btheta_{\tr{o}}/s;y_{t})\bx_{t}\T(\btheta_{\tr{o}}-\beta_{t}\bmu_{t-1})}{s^2+h(\bx_{t}\T\btheta_{\tr{o}}/s;y_{t})\omega_t}.
\end{align}
After this first update, a further refinement may be  obtained by alternating between \eqref{btheta.full} and the reassignment $\btheta_{\tr{o}}\leftarrow\bmu_{t}$ but, of course, it is much easier to use just one iteration with  $\btheta_{\tr{o}}=\beta_{t}\bmu_{t-1}$, which yields a simple update of the skills' mean and covariance matrix, and that defines the \gls{kf} rating:
\begin{empheq}[box=\fbox]{align}
\label{ov.bV.update.KF}
\ov{\bV}_t&\leftarrow \beta_{t}^2 \bV_{t-1}+\epsilon_t \bI\\
\omega_t &\leftarrow \bx_t\T\ov{\bV}_t \bx_t\\
g_t &\leftarrow g(\beta_{t}\bx_{t}\T\bmu_{t-1}/s;y_{t})\\
h_t &\leftarrow h(\beta_{t}\bx_{t}\T\bmu_{t-1}/s;y_{t})\\
\label{oneshot.mean.KF}
\bmu_{t}&\leftarrow\beta_{t}\bmu_{t-1} + \ov\bV_t\bx_t\frac{sg_t}{s^2+h_t\omega_t}\\
\label{ht.update.KF}
h_t &\leftarrow h(\beta_{t}\bx_{t}\T\bmu_{t}/s;y_{t})\\
\label{Vt.update.KF}
\bV_{t}
&\leftarrow
\ov\bV_t -\ov\bV_t\bx_{t}\bx\T_{t}\ov\bV_t
\frac
{ h_t}
{s^2+h_t\omega_t},
\end{empheq}
where \eqref{Vt.update.KF} is obtained from \eqref{eq:bV.full} by setting $\btheta_\tr{o}\leftarrow \beta_{t} \bmu_t$. 
On the other hand, since we are in the realm of approximations, we might also use $\btheta_{\tr{o}}=\beta_{t} \bmu_{t-1}$ in \eqref{eq:bV.full}, which amounts to ignoring/removing \eqref{ht.update.KF}; this is what we do in the rest of this work.

The initialization is done by $\bV_0 \leftarrow v_0 \bI$, where $v_0$ is the prior variances of the skills.

\subsection{Simplified Kalman Filters and Stochastic Gradient}\label{Sec:SKF}


We can now translate \eqref{ov.bV.update.KF}-\eqref{Vt.update.KF} taking into account the fact that the matrices are diagonal, \ie by replacing $\bV_t$ with $\tr{diag}(\bv_t)$; this yields the following equations of the \gls{vskf} rating:
\begin{empheq}[box=\fbox]{align}
\label{ov.bV.update.vSKF}
\ov{\bv}_t&\leftarrow \beta_{t}^2 \bv_{t-1}+\epsilon_t \bone\\
\omega_t&\leftarrow\sum_{m\in\set{\mcI_{t},\mcJ_{t}}} \ov{v}_{t,m}\\
g_t &\leftarrow g(\beta_{t}\bx_{t}\T\bmu_{t-1}/s;y_{t})\\
h_t &\leftarrow h(\beta_{t}\bx_{t}\T\bmu_{t-1}/s;y_{t})\\
\label{oneshot.mean.vSKF}
\bmu_{t}&\leftarrow\beta_{t}\bmu_{t-1} + \ov\bv_t\odot\bx_t\frac{s g_t}{s^2+h_t\omega_t}\\
\label{Vt.update.vSKF}
\bv_{t}
&\leftarrow
\ov\bv_t \odot\Big( \bone  -  \ov\bv_t \odot |\bx_t|\frac
{ h_t}
{s^2+h_t\omega_t}\Big),
\end{empheq}
where $\odot$ denotes the element-by-element multiplication, and the initialization is done as $\bv_0\leftarrow v_0 \bone$.

In particular, exploiting the form of the scheduling vector $\bx_t$ (with elements $x_{t,m}\in\set{-1,0,1}$), we 
see that the  players who are not involved in the game ($m\notin\set{\mcI_t,\mcJ_t}$ and  thus $x_{t,m}=0$), are updated as
\begin{align}
\mu_{t,m}&=\beta_{t}\mu_{t-1,m},\\
v_{t,m}&=\beta_{t}^2 v_{t-1,m} +\epsilon_t.
\end{align}
Most often $\beta_{t}=1$ will be used and then the means of the skills do not change, but the variance grows with $t$. This is compatible with the intuition we have about the rating procedure: the players not involved in the game should not change their mean (remember, the mean is approximated by the mode, thus it should be interpreted as the \gls{ml} estimate of the skill), while the growing variance corresponds to increased uncertainty about the skills' values due to passed time. 

It is worthwhile to note that a similar algorithm was proposed in \cite{Paleologu13} for $\ell(z;y)$ being a quadratic function, \ie in the context in which  the Kalman algorithm is conventionally used.

As for the scalar-covariance model, we have to replace $\bv_t$ with $v_t\bone$ in the \gls{vskf} rating, which will yield the following equations of the \gls{sskf} rating:
\begin{empheq}[box=\fbox]{align}
\ov{v}_t&\leftarrow \beta_{t}^2 v_{t-1}+\epsilon_t\\
\omega_t&\leftarrow 2F \ov{v}_{t}\\
g_t &\leftarrow g(\beta_{t}\bx_{t}\T\bmu_{t-1}/s;y_{t})\\
h_t &\leftarrow h(\beta_{t}\bx_{t}\T\bmu_{t-1}/s;y_{t})\\
\label{oneshot.mean.sSKF}
\bmu_{t}&\leftarrow\beta_{t}\bmu_{t-1} + \ov{v}_t\bx_t\frac{s g_t}{s^2+h_t\omega_t}
\\
\label{Vt.update.sSKF}
v_{t}
&\leftarrow
\ov{v}_t \Big( 1  -   \frac{\omega_t}{M}\frac
{h_t}
{s^2+h_t\omega_t}\Big),
\end{empheq}
where $F=|\mcI_{t}|=|\mcJ_{t}|$ and the initialization requires setting $v_0$.

Another simplification is obtained if we assume that the variance $\ov{v}_t$ is constant across time $t$, \ie $\ov{v}_t=\ov{v}$, as done also in \cite{Ingram21}. We obtain then the \gls{fskf}
\begin{empheq}[box=\fbox]{align}
g_t &\leftarrow g(\beta_{t}\bx_{t}\T\bmu_{t-1}/s;y_{t})\\
h_t &\leftarrow h(\beta_{t}\bx_{t}\T\bmu_{t-1}/s;y_{t})\\
\label{oneshot.mean.fSKF}
\bmu_{t}&\leftarrow\beta_{t}\bmu_{t-1} + \ov{v}\bx_t\frac{sg_t }{s^2+h_t 2F\ov{v}},
\end{empheq}
where the initialization requires setting $\ov{v}$.

All the \gls{skf} rating algorithm adjust the mean in the direction of the gradient, $g_t$, of the log-likelihood $\ell(z_t/s;y_t)$; they differ in the way the adjustment step is calculated from the previous results which mostly depends on the the second-order derivative, $h_t$. 

And finally, ignoring $h_t$, \eqref{oneshot.mean.fSKF} may be written as 
\begin{empheq}[box=\fbox]{align}\label{SG.update}
    \bmu_{t}\leftarrow \bmu_{t-1} + \ov{v}/s \bx_t g_t,
\end{empheq}
which is the same as the \gls{sg} algorithm with the adaptation step being proportional to $\ov{v}$, the latter has the meaning of the posterior variance of the skills (which we suppose to be known).

As we will see, depending on the model, ignoring $h_t$ may make sense. In particular, using the Bradley-Terry or the Davidson models, we obtain $\lim_{|z|\rightarrow{\infty}}h(z;y_t) =0$, see \eqref{h.Logistic} and \eqref{Davidson.h}. That is, for large differences between the skills, the second derivative of $\ell(z;y_t)$, may indeed be close to zero.


At this point it is useful to comment on the use of the scale. While in practice, $s=400$, \eg \cite{fide_calculator}, \cite{eloratings.net}, \cite{Silver20} or $s=600$ \cite{fifa_rating} were applied, the value of $s$ is entirely arbitrary and, actually, irrelevant from the algorithmic point of view, as stated in the following:
\begin{proposition}\label{Prop:SKF.scale}
We denote by $\bmu_t(s,v_0,\epsilon)$ and by $\bV_t(s,v_0,\epsilon)$ the mean and the covariance matrix of the skills obtained using the \gls{kf} algorithm with the scale $s$, and initialization parameters $v_0$ and $\epsilon$. Then
\begin{align}
    \bmu_t(s,s^2v_0,s^2\epsilon)&=s\bmu_t(1,v_0,\epsilon)\\
    \bV_t(s,s^2v_0,s^2\epsilon)&=s^2\bV_t(1,v_0,\epsilon).
\end{align}

For the \gls{vskf} algorithm we will obtain $\bv_t(s,s^2v_0,s^2\epsilon)=s^2\bv_t(1,v_0,\epsilon)$ while for the \gls{sskf} algorithm,  $v_t(s,s^2v_0,s^2\epsilon)=s^2v_t(1,v_0,\epsilon)$, where $\bv_t$ and $v_t$ are shown to depend on $v_0$ and $\epsilon$. On the other hand, for the \gls{fskf} and the \gls{sg}, the mean depends solely on $\ov{v}$ and thus we  obtain $\bmu_t(s,s^2 \ov{v})=s\bmu_t(1,\ov{v})$.
\end{proposition}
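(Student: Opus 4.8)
The plan is to establish the identity for the \gls{kf} algorithm by induction on the game index $t$, and then to transfer it to the simplified variants with essentially no extra work. Write $\bmu_t,\bV_t$ for the iterates of the \gls{kf} run with scale $1$ and parameters $(v_0,\epsilon)$, and $\bmu_t',\bV_t'$ for the iterates of the run with scale $s$ and parameters $(s^2 v_0,s^2\epsilon)$. The claim to be proved is the joint statement $\bmu_t'=s\bmu_t$ and $\bV_t'=s^2\bV_t$ for every $t$; both halves are needed simultaneously because the mean update \eqref{oneshot.mean.KF} involves $\ov\bV_t$.

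For the base case, the initialization $\bV_0\leftarrow v_0\bI$ gives $\bV_0'=s^2 v_0\bI=s^2\bV_0$, and the scale-independent initialization $\bmu_0\leftarrow\bzero$ gives $\bmu_0'=\bzero=s\bmu_0$. For the inductive step I would walk through the boxed equations \eqref{ov.bV.update.KF}--\eqref{Vt.update.KF} in order. Since $\epsilon_t=(\tau(t)-\tau(t-1))\epsilon$ from \eqref{epsilon.t} is linear in $\epsilon$ while $\beta_t$ from \eqref{beta.t} does not involve the scale, \eqref{ov.bV.update.KF} gives $\ov\bV_t'=\beta_t^2(s^2\bV_{t-1})+(s^2\epsilon_t)\bI=s^2\ov\bV_t$, and hence $\omega_t'=\bx_t\T\ov\bV_t'\bx_t=s^2\omega_t$. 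The key observation --- the single structural fact on which the whole proposition rests --- is that the argument passed to $g(\cd\,;y_t)$ and $h(\cd\,;y_t)$ is $\beta_t\bx_t\T\bmu_{t-1}'/s=\beta_t\bx_t\T(s\bmu_{t-1})/s=\beta_t\bx_t\T\bmu_{t-1}$, identical to the argument used in the scale-$1$ run, so $g_t'=g_t$ and $h_t'=h_t$. Substituting these and the scalings of $\ov\bV_t',\omega_t'$ into \eqref{oneshot.mean.KF}, the explicit $s$ in the numerator and the $s^2$ in the denominator combine with the $s^2$ carried by $\ov\bV_t'$ to leave a net factor $s$, \ie $\bmu_t'=s\bmu_t$; the optional re-evaluation \eqref{ht.update.KF} again leaves $h_t$ unchanged since $\beta_t\bx_t\T\bmu_t'/s=\beta_t\bx_t\T\bmu_t$, and in \eqref{Vt.update.KF} the $s^4$ from $\ov\bV_t'\bx_t\bx_t\T\ov\bV_t'$ divided by the $s^2$ from the denominator, together with the leading $\ov\bV_t'=s^2\ov\bV_t$, yields $\bV_t'=s^2\bV_t$. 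This closes the induction, and nothing changes if the mode-finding step is iterated, because each refinement repeats the same invariant computation.

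For the \gls{vskf} and \gls{sskf} algorithms, the boxed recursions \eqref{ov.bV.update.vSKF}--\eqref{Vt.update.vSKF} and \eqref{oneshot.mean.sSKF}--\eqref{Vt.update.sSKF} are structurally the \gls{kf} recursion with $\bV_t$ replaced by $\tr{diag}(\bv_t)$, respectively by $v_t\bI$; since these substitutions commute with multiplication by a scalar, the same induction gives $\bv_t'=s^2\bv_t$ and $v_t'=s^2 v_t$, and since the sequences $\bv_t,v_t$ are themselves generated recursively from $v_0$ with increments proportional to $\epsilon$, they depend on $(v_0,\epsilon)$ exactly as stated. For the \gls{fskf} and \gls{sg} recursions \eqref{oneshot.mean.fSKF} and \eqref{SG.update} there is no variance update at all: $\ov v$ enters only as a fixed step-size parameter, and repeating the mean-update computation with $\ov v\mapsto s^2\ov v$ --- the normalized argument of $g,h$ being invariant as before --- gives $\bmu_t(s,s^2\ov v)=s\bmu_t(1,\ov v)$.

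I do not expect a genuine obstacle: the proof is pure bookkeeping. The only thing requiring care is the accounting of powers of $s$ --- every variance-type quantity acquires an $s^2$, every mean an $s$, every explicit $s$ or $s^2$ in \eqref{oneshot.mean.KF} and \eqref{Vt.update.KF} stays in place --- arranged precisely so that the normalized arguments $\bx_t\T\bmu/s$ of the nonlinearities $g$ and $h$ are left untouched; once that invariance is in hand, every step is immediate. The lone implicit modelling assumption is that the mean is initialized at a scale-independent value (here $\bzero$); a nonzero prior mean would simply have to be scaled by $s$ as well.
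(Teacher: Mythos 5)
Your proof is correct and follows essentially the same route as the paper's: an induction on $t$ whose key observation is that the normalized argument $\beta_t\bx_t\T\bmu_{t-1}/s$ of $g$ and $h$ is invariant under the joint rescaling, after which the powers of $s$ in \eqref{oneshot.mean.KF} and \eqref{Vt.update.KF} cancel to give $\check{\bmu}_t=s\bmu_t$ and $\check{\bV}_t=s^2\bV_t$, with the simplified variants inheriting the result by the same computation. Your additional remarks (the invariance of the re-evaluated $h_t$ in \eqref{ht.update.KF}, the scale-independent initialization of the mean) are minor elaborations the paper leaves implicit, not a different argument.
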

\begin{proof} 
See \appref{Proof:SKF.scale}.
\end{proof}

\propref{Prop:SKF.scale} simply says that the scale, $s$, is not identifiable from the data so we can ignore it, \eg use $s=1$ (which simplifies the notation) and adjust only the parameters $\beta$,  $\epsilon$, and $v_0$. The scale may be then included in the final results by multiplying the means $\bmu_t$ (by $s$) and the co-/variances $\bV_t$, $\bv_t$, or $v_t$ (by $s^2$). 

Nevertheless, by introducing the scale we are able to compare our rating algorithms with those that can be found in the literature.

In particular, we can rewrite \eqref{SG.update}
\begin{align}\label{SG.update.s1}
    \bmu_{t}\leftarrow \bmu_{t-1} + K s \bx_t g_t,
\end{align}
where we use $\ov{v}=Ks^2$ with $K$ being the adaptation step defined for the scale $s=1$. Since $K$ should be seen as the variance $\ov{v}$, it clarifies the well-known variable-step strategy in the \gls{sg} adaptation, where the step $K$ is decreased after many games are played: this is when the posterior variance decreases.

\section{From skills-outcome models to new online ratings}\label{Sec:New.Ratings}

We will turn to the popular skills-outcome models that has been often used and find the functions $g(z;y_t)$ and $h(z;y_t)$ which must be used in the \gls{kf} and the \gls{skf} algorithms
\begin{itemize}
    \item Thurston model \cite{Thurston27} (binary games) uses $y_t=0$ for the away win and $y_t=1$ for the home win:
    \begin{align}
    \label{Thurston.L}
     L(z;y_{t})  &= \Phi\left(z\right)\IND{y_{t}=1}+\Phi\left(-z\right)\IND{y_{t}=0},\\
     \label{Thurston.g}
     g(z;y_{t}) &= V(z)\IND{y_t=1} -V(-z)\IND{y_t=0},\\
     \label{Thurston.h}
     h(z;y_{t})& =W(z)\IND{y_t=1} +W(-z)\IND{y_t=0},
     \end{align}
     where $\Phi(z)=\int_{-\infty}^z\ov\mcN(t)\dd t$, $\ov\mcN(t)=\mcN(t;0,1)$, and
     \begin{align}
        \label{Thurston.V}
        V(z)&=\frac{\ov{\mcN}(z)}{ \Phi\big( z\big)}, \\
        \label{Thurston.W}
        W(z)&=-V'(z)=V(z)\big(z+V(z)\big).
    \end{align}
    
    \item Bradley-Terry model \cite{Bradley52} (binary games), with $y_t=0$ (away win) and $y_t=1$ (home win)
    \begin{align}\label{L.Logistic}
    L(z;y_{t})&=F_\tr{L}\big( z\big)\IND{y_{t}=1}+F_\tr{L}\big( -z\big)\IND{y_{t}=0},\\
    \label{g.Logistic}
    g(z;y_{t}) &= \ln 10 \big( y_t-F_\tr{L}(z) \big),\\
    \label{h.Logistic}
    h(z;y_{t}) &= \left(\ln 10\right)^2 F_\tr{L}(z)F_\tr{L}(-z),
    \end{align}
    where we use the logistic function
    \begin{align}\label{F.Logistic}
        F_\tr{L}(z)=\frac{1}{1+10^{-z}}.
    \end{align}
    
    \item Davidson draw model \cite{Davidson70}, \cite{Szczecinski20}  with $y_t=0$ (away win),  and $y_t=1$ (draw), and $y_t=2$ (home win)
    \begin{align}
    \label{Davidson.L}
     L(z;y_{t})&=F_\tr{D}(-z)\IND{y_{t}=0}+\kappa\sqrt{F_\tr{D}(-z)F_\tr{D}(z)}\IND{y_{t}=1}+F_\tr{D}(z)\IND{y_{t}=2},\\
     \label{Davidson.g}
     g(z;y_{t}) &= 2\ln 10 \big( \hat{y}_t-G_\tr{D}(z) \big)\\
     \label{Davidson.h}
     h(z;y_{t})& =\left(\ln 10\right)^2\frac{\kappa 10^{z}+4 +\kappa 10^{-z}}{(10^{z}+\kappa+ 10^{-z})^2},
     \end{align}
     where $\hat{y}_t=\frac{1}{2}y$ may be treated as the ``score'' of the game, and
     \begin{align}
         F_\tr{D}(z)&=\frac{10^z}{10^{-z}+\kappa+10^{z}},\\
         \label{G.D.z}
         G_\tr{D}(z)&=\frac{10^{z}+\kappa/2}{10^{-z}+\kappa+10^{z}}.
     \end{align}
    
    Note that setting $\kappa=0$, \ie removing the possibility of draws, we obtain $F_\tr{L}(z) = G_\tr{D}(z/2)$, \ie we recover the equations of the Bradley-Terry model with halved scale. A simple, but less obvious observation is that, setting $\kappa=2$, we obtain $G_\tr{D}(z)=F_\tr{L}(z)$ and thus $g(z;y_t)$ in \eqref{g.Logistic} is half of  \eqref{Davidson.g}. 
    A direct consequence, observed in  \cite{Szczecinski20}, is that, even if the Bradley-Terry and the Davidson models are different, their \gls{sg} updates \eqref{SG.update} may be identical.
\end{itemize}

\subsection{Comparison with TrueSkill algorithm}\label{Sec:TrueSkill}

The TrueSkill algorithms is derived assuming that, for given skills $\btheta_t$, the outcome, $y_t$, is obtained by discretization of a variable $d_t=z_t +u_t=\bx\T_t\btheta_t+u_t$, \ie
\begin{align}\label{y_t.TrueSkill}
y_t=\IND{d_t \ge 0},
\end{align}
where $u_t$ is a zero-mean Gaussian variance with variance $\sigma^2$. Thus, $\PR{y_t=y|z_t}=L(z_t/\sigma;y)$, where $L(\cd,\cd)$ is given by the Thurston equation \eqref{Thurston.L}. So while $\sigma^2$ is the variance of the variable $u_t$, we may also treat $\sigma$ as the scale in the Thurston model.\footnote{To be more precise, the variance $\sigma^2$ in the TrueSkill algorithm is proportional to the number of players in the team, $F$.}

Considering the binary games, the  TrueSkill algorithm, described in \cite{trueskill} and \cite{Herbrich06} for two players, may be summarized as follows (for $m\in\set{i_t,j_t}$):
\begin{align}
\ov{\bv}_t &\leftarrow\bv_{t-1}+\epsilon \bone,\\
\omega_t  &\leftarrow \ov{v}_{t,i_t}+\ov{v}_{t,j_t},\\ 
\tilde{\sigma}_t&\leftarrow \sigma \sqrt{1+\omega_t/\sigma^2},\\
\tilde{g}_t &\leftarrow g(\bx_{t}\T\bmu_{t-1}/\tilde{\sigma}_t; y_t),\\
\tilde{h}_t &\leftarrow h(\bx_{t}\T\bmu_{t-1}/\tilde{\sigma}_t; y_t),\\
\label{update.theta.t.TrueSkill}
\mu_{t,m} &\leftarrow\mu_{t-1,m} + x_{t,m}\ov{v}_{t,m}  \frac{ \tilde{g}_t\sigma}{\sigma^2\sqrt{1+\omega_t/\sigma^2}}, \\
\label{update.var.t.TrueSkill}
v_{t,m} & \leftarrow\ov{v}_{t ,m} \Big(1 - \frac{\ov{v}_{t,m}\tilde{h}_t}{\sigma^2+\omega_t}\Big),
\end{align}
where $h(\cd;\cd)$ and $g(\cd;\cd)$ are derived in \eqref{Thurston.L}-\eqref{Thurston.W} for the Thurston model.

The differences with the \gls{vskf} algorithm are the following: i) the scale used to calculate the fist and the second derivatives is increased by the factor $\sqrt{1+\omega_t/\sigma^2}$, and ii) the denominator of the update terms in \eqref{update.theta.t.TrueSkill} and \eqref{update.var.t.TrueSkill} is not affected by $h_t$ as it is the case in the corresponding equations \eqref{oneshot.mean.vSKF} and \eqref{Vt.update.vSKF} of the \gls{vskf} algorithm. In particular, knowing that $\omega_t h_t\leq \omega_t$, we see that the posterior variance $v_{t,m}$ decreases faster in the \gls{vskf} algorithm than it does in the TrueSkill algorithm.

Numerical examples shown in \secref{Sec:Num.results} will allow us to asses the impact of these differences between the algorithms.

\subsection{Comparison with Glicko algorithm}\label{Sec:Glicko}

The Glicko algorithm, defined for two players in \cite[Eqs.~(9)-(10)]{Glickman99}  may be formulated using our notation as follows (for $m\in\set{i_t,j_t}$):
\begin{align}
\ov{\bv}_t&\leftarrow \bv_{t-1} +\epsilon_t\bone,\\
\omega_t&\leftarrow \ov{v}_{t,i_t}+\ov{v}_{t,j_t},\\
\label{tilde.sigma.Glicko}
\tilde{\sigma}_{t,m}&\leftarrow \sigma r( \omega_t-\ov{v}_{t,m}),\\
\tilde{g}_{t,m}&\leftarrow 
g\big(\bx_{t}\T\bmu_{t-1}/\tilde{\sigma}_{t,m}; y_t )\\
\tilde{h}_{t,m}&\leftarrow h\big(\bx_{t}\T\bmu_{t-1}/\tilde{\sigma}_{t,m}; y_t\big)\\
\label{update.mu.t.Glicko}
\mu_{t,m}&\leftarrow\mu_{t-1,m} +  
\ov{v}_{t,m} x_{t,m} \frac{\tilde{\sigma}_{t,m} \tilde{g}_{t,m}}{\tilde{\sigma}_{t,m}^2+\ov{v}_{t,m} \tilde{h}_{t,m}},\\
\label{update.v.t.Glicko}
v_{t,m} & \leftarrow \ov{v}_{t,m}\frac{\tilde{\sigma}_{t,m}^2}{\tilde{\sigma}_{t,m}^2+\ov{v}_{t,m} \tilde{h}_{t,m}},
\end{align}
where 
\begin{align}
\label{rtjt}
r(v)&=\sqrt{1+  \frac{v a}{\sigma^2}},
\end{align}
$a=3\ln^2 10/\pi^2$ is the factor which allows us to approximate the logistic distribution with the Gaussian distributions (see discussion in \secref{Sec:Synthetic}), and 
$g(z;y_t)$ and $h(z;y_t)$ are defined for the Bradley-Terry model, respectively, in \eqref{g.Logistic} and \eqref{h.Logistic}.

The difference between $g_{t}$, $h_t$ in the \gls{vskf} rating (based on the Bradley-Terry model) and $\tilde{g}_{t,m}$, $\tilde{h}_{t,m}$ in the Glicko algorithm, is due to the presence of the factor $r(\omega_t-\ov{v}_{t,m})$ which multiplies the scale in \eqref{tilde.sigma.Glicko}. However, this factor tends to unity when the variance of the opposing players decreases, as it is the case after convergence. Then, we may use $\tilde{g}_{t,m}\approx g_t$ and $\tilde{h}_{t,m}\approx h_t$.

Further, if we use $\ov{v}_{t,m}$ instead of $\omega_t$ in the denominator of the mean \eqref{oneshot.mean.vSKF} and of the variance \eqref{Vt.update.vSKF} updates in the \gls{vskf} algorithm, we will obtain, respectively, the Glicko updates of the mean, \eqref{update.mu.t.Glicko} and of the variance, \eqref{update.v.t.Glicko}. But, because $\ov{v}_{t,m}<\omega_t$, the update step size is always larger in the Glicko algorithm comparing to the \gls{vskf} algorithm.

To asses the impact of the above differences on the performance we will evaluate the Glicko algorithm using numerical examples in \secref{Sec:Num.results}.

\subsection{Comparison with Elo algorithm}\label{Sec:Elo}

Considering again the binary games and using the Bradley-Terry model, the \gls{sg} update \eqref{SG.update.s1} may be written as
\begin{align}\label{SG.BT}
    \bmu_t \leftarrow \bmu_{t-1} + \tilde{K} s \bx_t \big(y_t-F_\tr{L}(z_t)\big),
\end{align}
where $\tilde{K}$ absorbs the  term $\ln 10$ from \eqref{g.Logistic}, and we recognize \eqref{SG.BT} as the well-known Elo rating algorithm. The fact that the Elo algorithm may be seen as the \gls{sg} update in the Bradly-Terry model has been already noted before, \eg in \cite{Kiraly17}, \cite{Szczecinski20}, \cite{Lasek20}.

On the other hand, using the Thurston model and after simple algebraic transformations of \eqref{Thurston.g} we obtain the following \gls{sg} update:
\begin{align}\label{SG.Thurston}
    \bmu_t \leftarrow \bmu_{t-1} + K s \bx_t \big(y_t-\Phi(z_t)\big)\xi(z_t),
\end{align}
where $\xi(z)=\ov{\mcN}(z)/\big[\Phi(z)\Phi(-z)]\big]$. 

Since $\xi(z)$ is not constant, \ie it depends on $z$, \eqref{SG.Thurston}
is \emph{not the same} as the Elo rating algorithm proposed initially by \cite{Elo08_Book} under the following form:
\begin{align}\label{Elo.original}
    \bmu_t \leftarrow \bmu_{t-1} + K s \bx_t \big(y_t-\Phi(z_t)\big).
\end{align}

In other words, the original version of the Elo algorithm \eqref{Elo.original} does not implement the \gls{sg} update in the Thurston model. We indicate it merely for completeness of the analysis because, nowadays, the Elo algorithm is practically always used with the Bradley-Terry model as defined in \eqref{SG.BT}.

\section{Numerical examples}\label{Sec:Num.results}

We will proceed in two steps. First, in order to assess the effect of approximations, we will use the synthetic data generated using the predefined skills-outcome models and  the Gaussian random walk for skills dynamics defined  in \secref{Sec:Model}. In this way, knowing exactly the model underlying the data and using it for the derivation of the algorithm, the eventual differences between the algorithms will be due to the approximations. Further the effect of the model mismatch may be also assessed using the algorithms based on the model different from the one used to generate the data.

The insight obtained from the synthetic examples will allow us to interpret the results obtained from empirical data.

\subsection{Synthetic data}\label{Sec:Synthetic}
We suppose there are $M$ players in the pool and every ``day" (or any other time unit) there are $J=M/2$ games with random scheduling; the season lasts $D$ days. The time dependence required by \eqref{epsilon.t} is defined as $\tau(1)=\tau(2)=\ld=\tau(J)=0$, $\tau(J+1)=\tau(J+2)=\ld=\tau(2J)=1$ etc. The number of games in the season is equal to  $T=DJ$. We use $M=20$ ($J=10$) and $D=100$, thus $T=1000$.\footnote{This bears a resemblance to a ``typical'' football season where, on average each team plays once per week. In practice, of course, the number of weeks, $D$, cannot be too large, \eg $D<40$.}

To generate the sequence of skills $\btheta_t, t=1, \ld, T$ we draw $\theta_{0,m}$ from a zero-mean, unit-variance Gaussian distribution; the remaining skills are obtaied using \eqref{dumped.Markov} with $\hat{\beta}=0.998$ and $\hat{\epsilon}=1-\hat{\beta}^2$. In this way we guarantee that $\Ex[\theta_{t,m}]=0$ and $\Ex[\theta^2_{t,m}]=1$. To evaluate how the perturbation of the skills affects the algorithms, after the day $d_{\tr{switch}}=40$, we remove the first $m_{\tr{switch}}=5$ players, which are already in the game, and replace them with new players whose skills $\theta_{d_{\tr{switch}},m}$ are generated from zero-mean, unit-variance Gaussian distribution; for $d>d_{\tr{switch}}$ we use again the random walk \eqref{dumped.Markov}. Such a ``switch'' scenario loosely reflects the case of new players joining the online games\footnote{The analogy is admittedly of limited scope because the online games do not care about the total number of players, $M$, to be constant. On the other hand, we do care, because we want to be able apply the \gls{kf} rating as defined in \secref{Sec:KF}} andallows us to evaluate how the algorithms deal with abrupt changes of the skills.

The algorithms adjust to this ``switch'' by zeroing the means and adjusting the variances of the newly arrived players, that is, setting $\mu_{t-1,m}\leftarrow0, v_{t-1,m}\leftarrow v_0, m=1,\ld,m_{\tr{switch}}$, where $t=M d_{\tr{switch}}/2$. For the \gls{kf} rating we also have to zero the covariances, $V_{t-1,m,l}\leftarrow 0, m=1,\ld,m_{\tr{switch}}, \forall l$, while in the \gls{sskf} rating we recalculate the average variance as $v_{t-1}\leftarrow v_{t-1}+ (v_0 - v_{t-1})m_{\tr{switch}}/M$.

The results of binary games, $y_t$, are generated with the probability defined by the Thurston model, \ie the probability of the home win is defined by
\begin{align}\label{p.t}
   p_t=\PR{y_t=1} = \Phi(\bx_{t}\T\btheta_t/\sigma),
\end{align}
where $\sigma^2$ may be interpreted as the variance of the Gaussian noise added to difference between the skills before the discretization defined in  \eqref{y_t.TrueSkill}. Thus, increasing $\sigma$ the observations becomex more ``noisy''. Most of the results are shown for $\sigma=1$ and later we will asses the impact of larger $\sigma$.

The performance of the algorithm is measured by the \gls{kl} divergence between the actual distribution of the games outcomes (defined by $p_t$) and the estimated distribution (defined by $L(\beta_t\bx_{t}\T\bmu_{t-1}/s; 1)$)
\begin{align}\label{DKL.metric}
    \tr{D}_{t} = p_t\log\frac{p_t}{L(\beta_t\bx_{t}\T\bmu_{t-1}/s; 1)} + (1-p_t)\log\frac{1-p_t}{L(\beta_t\bx_{t}\T\bmu_{t-1}/s; 0)}
\end{align}
that can be evaluated here because we know how data is generated.

\begin{figure}[bt]
\psfrag{beta0.98}{$\beta=0.98$}
\psfrag{xlabel}{$d$}
\begin{tabular}{cc}
    \scalebox{\sizf}{\includegraphics[width=\sizfs\linewidth]{./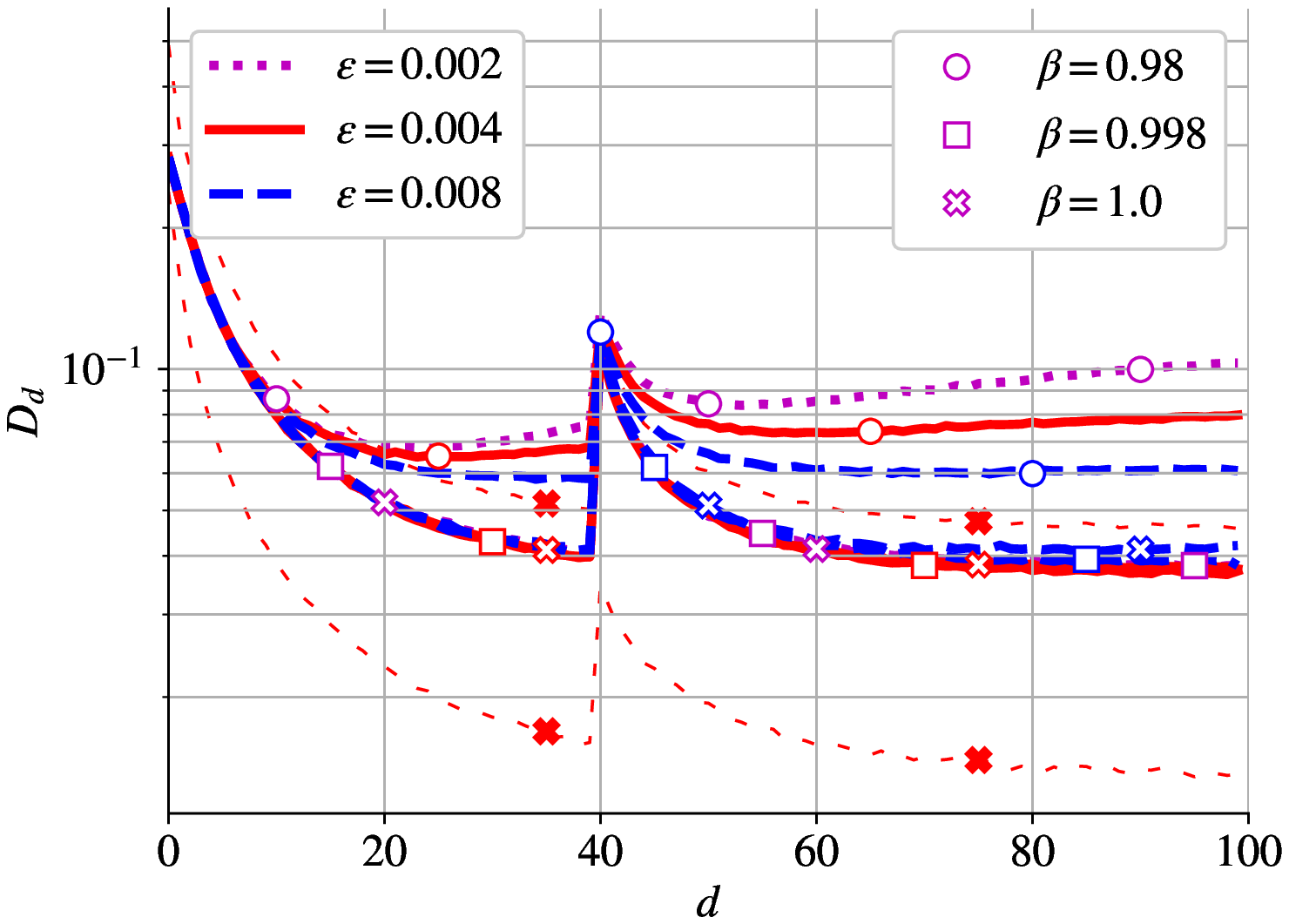}} &
    \scalebox{\sizf}{\includegraphics[width=\sizfs\linewidth]{./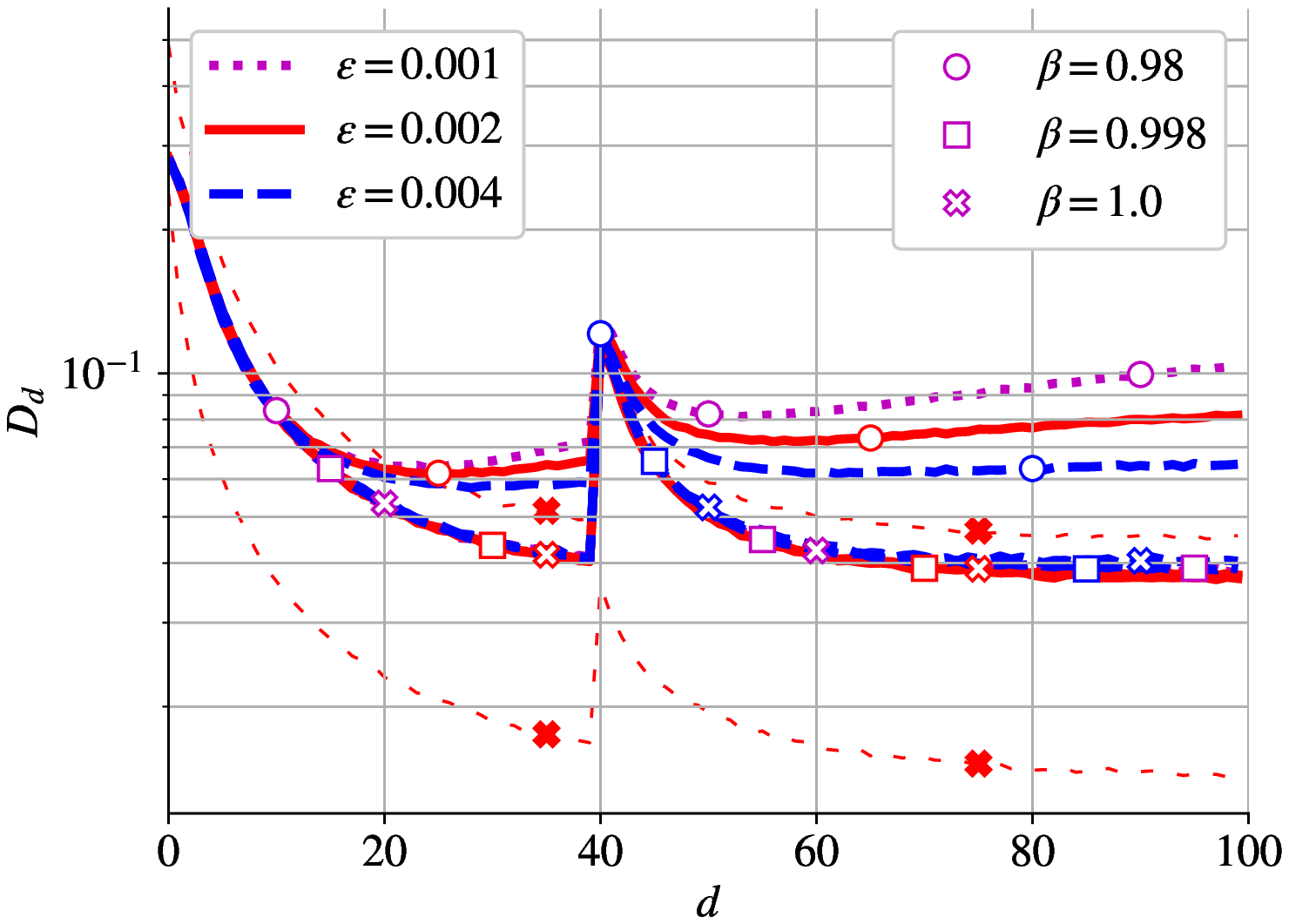}}\\
a) & b)
\end{tabular}
\caption{Average \gls{kl} divergence for different values of $\beta$ and $\epsilon$ used in the \gls{kf} rating algorithm based on a) the Thurston model and b) the Bradley-Terry model. The loosely dashed lines indicate the median and the third quartile (only for $\beta=1$ and a) $\epsilon=0.004$, b) $\epsilon=0.002$).}\label{Fig:start}
\end{figure}

Of course, $\tr{D}_t$ obtained from randomly generated data is also random, so we show in \figref{Fig:start} its mean obtained from 5000 simulation runs of the \gls{kf} rating algorithm based on the Thurston as well as on the Bradley-Terry models, for different values of $\beta$ and $\epsilon$, with $s=\sigma$, and $v_0=1$. Note that, for the Thurston model, using $v_0=1$ and $s=\sigma$, the same model is used for the data generation and for the rating.

To smooth the results, we show the average of all the results obtained in the same day $d$. We observe that the performance of the algorithm depends on $\beta$ being close to the actual value in the data-generation model: once $\beta$ is suitably  chosen, the effect of $\epsilon$ is of lesser importance. Nonetheless, the best mean performance is obtained with $\epsilon=\hat{\epsilon}=0.004$ (for the Thurston model) and $\epsilon=0.002$ (for the Bradley-Terry model). These parameters will be also used in other algorithms.

To put this (rather limited) importance of $\epsilon$ into perspective, we also show in \figref{Fig:start} the median and the third quartile (loosely dashed lines, for $\beta=1$ and the optimal value of $\epsilon$): it indicates that there is more variability due to the randomness of the metric $\tr{D}_{d}$ than due to the change in $\epsilon$.

To answer the question why, using two different models (Thurston and Bradley-Terry) practically the same results are obtained in \figref{Fig:start}a and \figref{Fig:start}b, we first note that the logistic distribution (underlying the Bradley-Terry model with the scale $s_{\tr{L}}$) has the variance equal to $s^2_{\tr{L}}/a$ (where $a\approx 1.6$, is defined after \eqref{rtjt}), while the Gaussian distribution (underpinning the Thurston model with the scale $s_{\tr{G}}$) has the variance $s^2_{\tr{G}}$. By equalizing the second moments of both distributions we obtain the relationship $s_{\tr{L}}= s_{\tr{G}}\sqrt{a} \approx 1.3 s_{\tr{G}}$. In other words, the Thurston model may be approximated with the Bradley-Terry model if we increase the scale by $\sqrt{a}$. 

On the other hand, from \propref{Prop:SKF.scale} we know that we may turn the table: we might keep the scale $s_{\tr{L}}=1$ and then multiply the parameters $v_0$ and $\epsilon$ by the factor $1/\sqrt{a} \approx 0.78$; since it is close to one, the effect of using the same scale and parameters in different models is merely visible. Nonetheless, using a value of $\epsilon\approx 0.5\hat{\epsilon}$ improves (slightly) the performance. The only remaining element which should be adjusted is the initial uncertainty defined by the variance $v_0$; in the rest of this work, for the Bradley-Terry model we will use $v_0=0.5$.

So, not very surprisingly, the similarity of the results for two different models is explained by the similarity of the models which happens due to our decision to use base-$10$ logarithm in the logistic function \eqref{F.Logistic}. 

\begin{figure}[bt]
\begin{tabular}{cc}
\scalebox{\sizf}{\includegraphics[width=\sizfs\linewidth]{./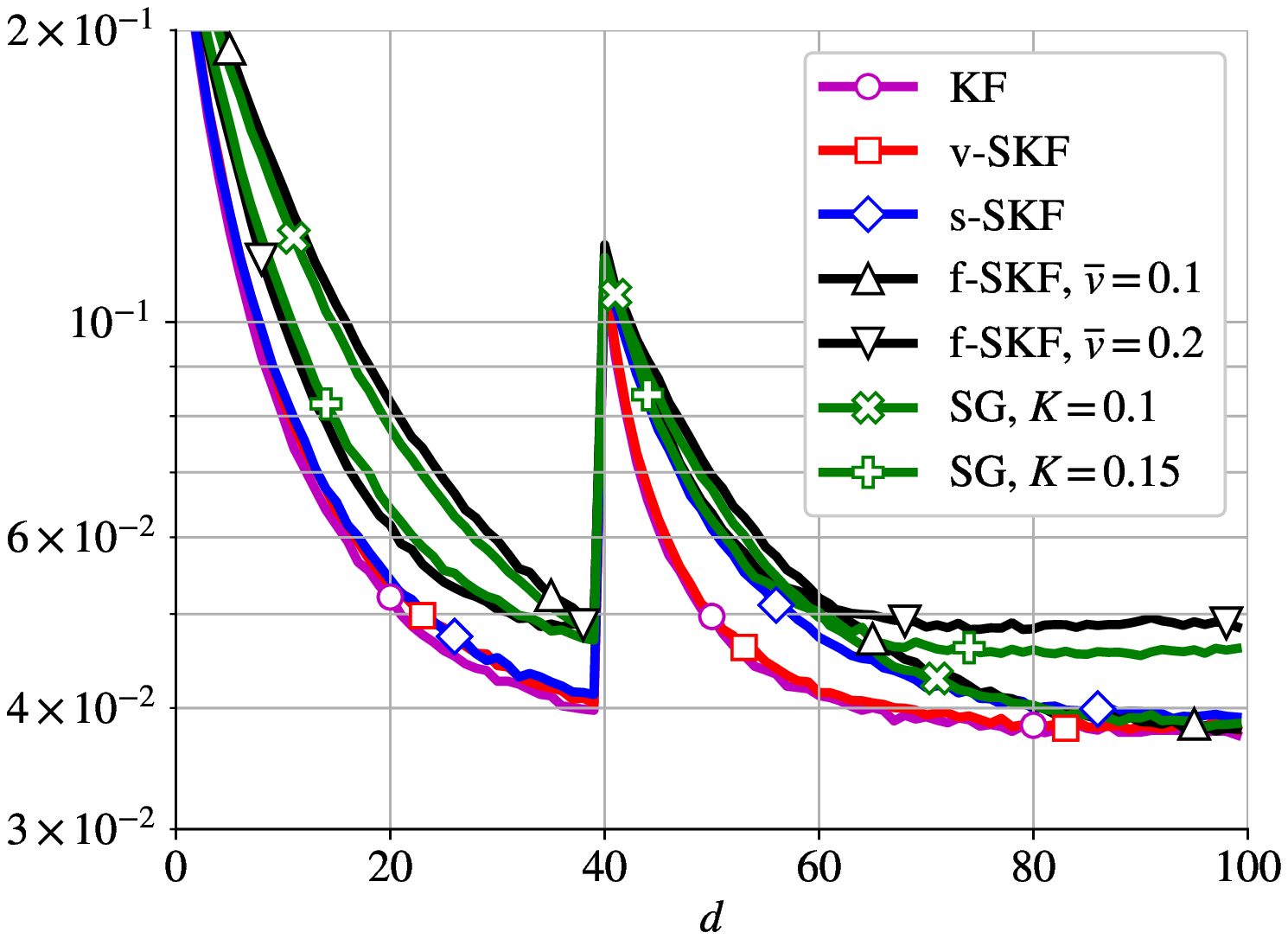}}
&
\scalebox{\sizf}{\includegraphics[width=\sizfs\linewidth]{./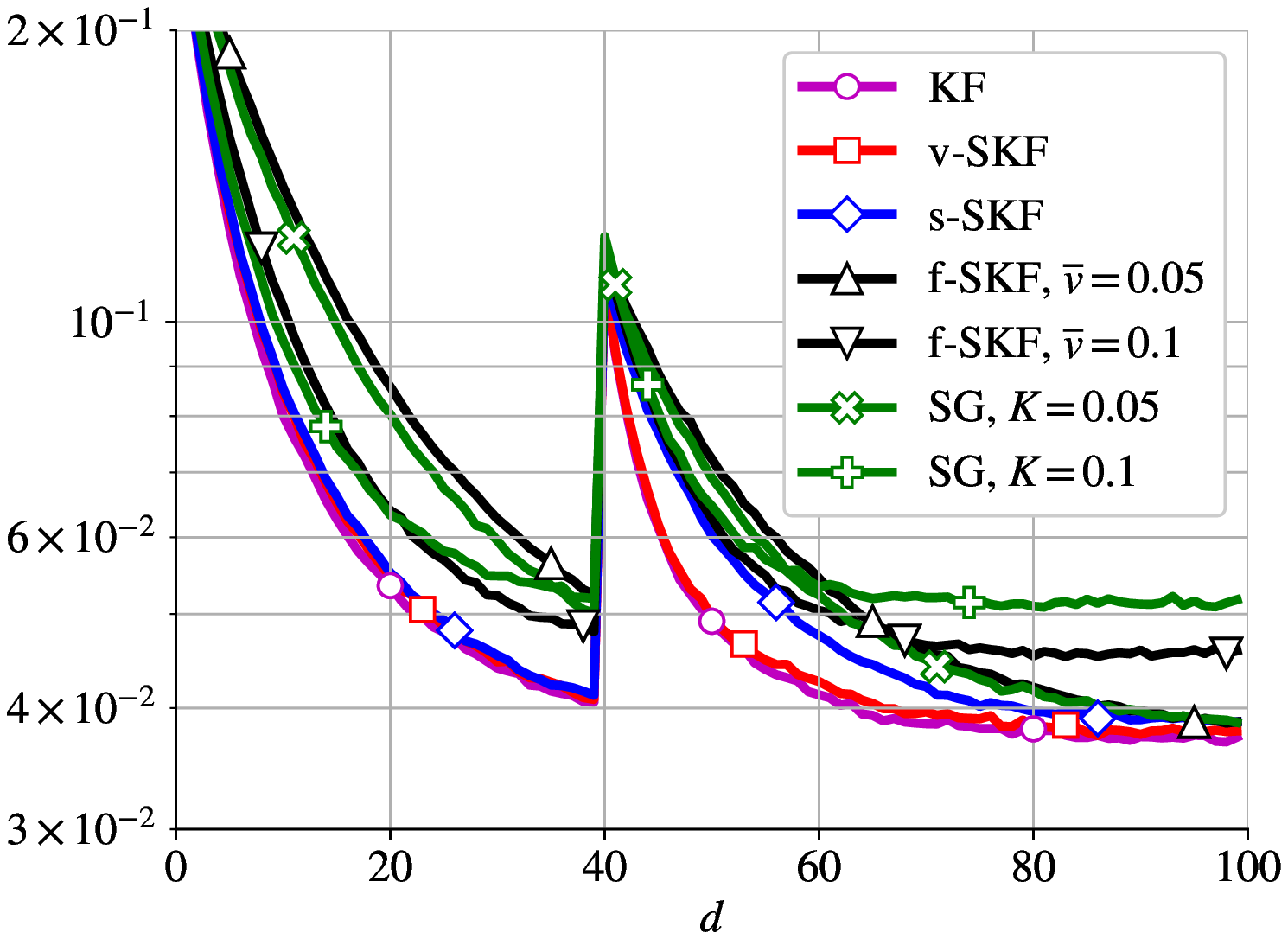}}\\
a) & b)
\end{tabular}
\caption{The average \gls{kl} divergence, when using the \gls{kf}, the \gls{vskf}, the \gls{sskf}, the \gls{fskf} and the \gls{sg} rating algorithms; $\beta=1$ and a) the Thurston model with $\epsilon=0.004$ and $v_0=1$; b) the Bradley-Terry model with $\epsilon=0.002$ and $v_0=0.5$.}\label{Fig:Thurston}
\end{figure}

We implement all the rating algorithms we proposed and comparing them in \figref{Fig:Thurston} we observe that:
\begin{itemize}
    \item Without surprise, the \gls{kf} ensures the best performance for both, the initialization phase (after $d=1$), and the post-switch phase (after $d=40$). 
    \item Rather surprisingly, the \gls{vskf} and the \gls{kf} ratings performs quasi-identically which suggests that the posterior correlation between the skills is not relevant even if the number of players in our example is moderate.
    \item The \gls{sskf} rating performs very well in the initialization phase because all the players have roughly the same variance and this is the assumption underpinning the algorithm. On the other hand, in the post-switch phase the variance of the players is disparate and then the convergence speed decreases.
    \item In the \gls{fskf} algorithm we may appreciate the trade-off: to increase the convergence speed we need larger $\ov{v}$, while the performance after convergence is improved with smaller $\ov{v}$. The value $\ov{v}$ which ensures the best performance after convergence may be deduced from the \gls{sskf} algorithm, where we have obtained $v_{T}\approx 0.1$ for the Thurston model and $v_T\approx 0.8$ for the Bradley-Terry model. Note again, that  this stays in line with the argument of matching the Gaussian and the logistic distributions: the relation between the posterior variances after convergence is close to $1/\sqrt{a}$.
    \item The \gls{sg} shares the drawbacks of the \gls{fskf} rating: larger $K$ improves the convergence speed at the expense of poorer performance after convergence. Note again the halving of the step size, $K$, for the Bradley-Terry model: remember, $K$ has the meaning of the variance and thus the same principle of matching the logistic and the Gaussian distributions we mentioned above applies. 
\end{itemize}

Overall, the important conclusions are:
\begin{itemize}
    \item 
        The \gls{vskf} algorithm is the best candidate for simple rating: it exploits the temporal model in the data and does not suffer loss comparing to the \gls{kf} rating,
    \item 
        Opting for further simplifications leads to some loss where, despite its simplicity, the \gls{sg} rating offers the performance comparable to other \gls{skf} ratings, and 
    \item 
        The model mismatch (applying the algorithms based on the Bradley-Terry model to the data generated using the Thurston model, see \figref{Fig:start}b) does not affect the performance in any significant manner. And while at first it may appear counter-intuitive, we should note that the performance of the algorithms is not evaluated by their ability to estimate the skills, $\btheta_t$, but rather by their predictive capability. Thus, using the Bradley-Terry model, the estimate $\bmu_t$ may be, indeed, far from the actual value of $\btheta_t$, because a different model if fitted to the data, but the prediction is little affected.
\end{itemize}

\begin{figure}[bt]
\begin{center}
\scalebox{\sizf}{\includegraphics[width=\sizfs\linewidth]{./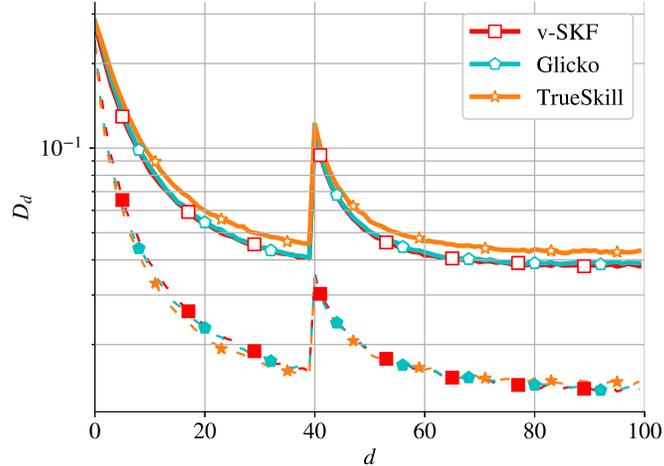}}
\end{center}
\caption{The average (solid line) and median (loosely dashed line) \gls{kl} divergence, when using the \gls{vskf} ($\epsilon=0.004$ and $v_0=1$), the TrueSkill ($\epsilon=0.004$ and $v_0=1$) and the Glicko ($\epsilon=0.002$ and $v_0=0.5$) rating algorithms; $\beta=1$.}\label{Fig:TrueSkill.Glicko}
\end{figure}

We apply now the TrueSkill and the Glicko algorithms to the same data set and show the results in \figref{Fig:TrueSkill.Glicko}, where we observe that the Glicko and the \gls{vskf} algorithms yield  practically indistinguishable results. This observations stays in line with the similarity of both algorithms which we observed in \secref{Sec:Glicko}. 

On the other hand, the TrueSkill algorithms, despite of being based on the same (Thurston) model which was used for data generation, suffers a small loss after convergence. This can be attributed to the adaptation step being increased comparing to the \gls{vskf} algorithm as we already noted in \secref{Sec:TrueSkill}. To put this difference in performance into perspective, we show also the median curve of the metrics; since the latter is much further from the mean than the differences among the algorithms, the ``loss'' of the TrueSkill may have no practical importance.

Before moving to the empirical data, we show in \figref{Fig:Thurston.noisy} the results of the \gls{vskf} and the \gls{sg} ratings obtained for different values of $\sigma=s$. Instead of the metric \eqref{DKL.metric}, which we will not be able to calculate in the empirical data, we show here the log-score 
\begin{align}\label{log.score.definition}
    \tr{LS}_t = - \sum_{y\in\mcY }\IND{y_t=y} \ell(\bx_{t}\T\bmu_{t-1}/s; y),
\end{align}
where $\mcY$ is the set of possible game outcomes. 

We see that increasing $\sigma$, \ie making the results more ``noisy'', the advantage of exploiting the temporal relationship between the skills is lost and the results obtained using the \gls{vskf} rating are very similar to those yield by the \gls{sg} rating. This leads to a cautionary note: if the uncertainty in the observations (that is, the ``noise") is large, the simple algorithms (such a the \gls{sg} rating) may provide satisfactory results.

\begin{figure}[bt]
\begin{center}
\scalebox{\sizf}{\includegraphics[width=\sizfs\linewidth]{./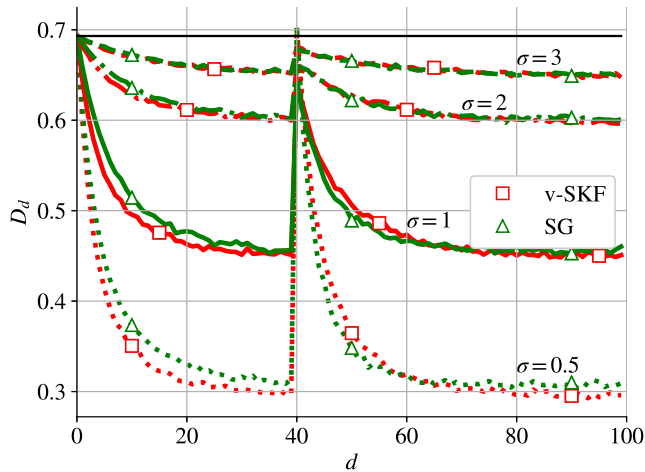}}
\end{center}
\caption{The average log-score obtained using for the Thurston model and the \gls{vskf} ($\epsilon=0.004$ and $v_0=1$) and the \gls{sg}  ($K=0.15\sigma$) ratings for different observation noise levels $\sigma$. The black horizontal line indicates the value $H=- \log 0.5\approx 0.69$ which is the entropy of uniformly distributed binary variable, see \eqref{H.definition}.}\label{Fig:Thurston.noisy}
\end{figure}

\subsection{Empirical data}\label{Sec:NHL}

We consider now the empirical results from
\begin{itemize}
    \item The ice-hockey games in the \gls{nhl} in the seasons 2005/06 -- 2014/15 except for the short season 2012/13. In this pre-expansion period, there were $M=30$ teams and the rules leading to the draws were kept the same.\footnote{Starting with the 2005/06 season, draws are not allowed and are resolved through shootouts if the game was tied in the overtime. Starting with the season 2015/16 the number of skaters in the overtime was changed from four to three.} We can thus treat the games as binary if we use the final result, or as ternary (\ie with draws) if we use the regulation-time results (before overtime/shootouts). Each team plays $82$ games so there are $T=1230=41M$ games in each season. 
    \item The football games of the \gls{epl} seasons  2009/10 -- 2018/19. There are $M=20$ teams, each playing $38$ games, thus $T=380$.
    
    \item The Americal footbal games of the \gls{nfl} in the seasons 2009/10 -- 2018/19. There are $M=32$ teams, each playing $16$ games, so $T=256$.
\end{itemize}

In the team games, the \gls{hfa} is present and, in the rating methods it is customary to take it into account by artificially ``boosting'' the skill of the home player (here, team): in all the functions taking $z_t/s$ as the argument we will rather use $z_t/s+\eta$, \eg in \eqref{pdf.y.theta} we use $L(z_t/s+\eta; y_t)$ instead of $L(z_t/s; y_t)$. The \gls{hfa} boost, $\eta$, must be found from data as we show in the following; note also that $\eta$ does not depend on the scale $s$. 

To consider the ``initialization" period we average
\eqref{log.score.definition} over the first $t_\tr{init}$ games
\begin{align}\label{LS.init}
    \ov{\tr{LS}}_\text{init} = \frac{1}{t_\text{init}}\sum_{t=1}^{t_\text{init}} \tr{LS}_t,
\end{align}
where, $t_\tr{init}=4 M$, which means that, in the first $t_\tr{init}$ games, each team played, on average, 8 times.

The performance after ``convergence" is evaluated by averaging  \eqref{log.score.definition} over the second half of the season
\begin{align}\label{LS.conv}
    \ov{\tr{LS}}_\tr{final} = \frac{2}{T}\sum_{t=T/2+1}^T \tr{LS}_t.
\end{align}
Further, we take the mean of \eqref{LS.init} and \eqref{LS.conv} over all seasons considered.

We will use the Bradley-Terry model in the binary games (in the \gls{nhl}) and  the \gls{hfa}-boost parameter $\eta$ is evaluated as \cite{Szczecinski20}
\begin{align}
    \eta=\log_{10}\frac{f_1}{f_0},
\end{align}
where $f_y$ is the estimated frequency of the game outcome $y\in\mcY$. Here, from the nine \gls{nhl} seasons under study we obtain $f_0\approx 0.45$ and $f_1\approx 0.55$, and thus $\eta= 0.08$.

For the ternary games we use the Davidson model \eqref{Davidson.L}-\eqref{Davidson.h}, and we estimate the home- and the draw parameters, $\eta$ and $\kappa$, using the strategy shown in \cite{Szczecinski20}, \cite{Szczecinski20c}
\begin{align}
    \eta &= \frac{1}{2}\log_{10} \frac{f_2}{f_0},\\
    \kappa &= \frac{f_1}{\sqrt{f_0 f_2}},
\end{align}
where, as before, $f_y$ are the frequencies of the events $y\in\set{0,1,2}$ estimated from the games in all seasons considered. These are i) for the \gls{nhl}: $f_0\approx 0.33$, $f_1\approx 0.24$, and $f_2\approx 0.43$,  ii) for the \gls{epl}: $f_0\approx 0.29$, $f_1\approx 0.25$, and $f_2\approx 0.46$, and iii) for the \gls{nfl}: $f_0\approx 0.43$, $f_1\approx 0.003$, and $f_2\approx 0.57$. The corresponding values of $\eta$ and $\kappa$ are shown in \tabref{tab:log-score}.

We consistently use $\beta=1$ and $s=1$; the parameters $v_0$, $\epsilon$ (for the \gls{vskf} algorithm), and the update step $K$ (for the \gls{sg} algorithm) which yield the best results are shown them in \tabref{tab:log-score}; they were found by scanning the space of admissible values. 

The log-score results shown in  \tabref{tab:log-score} may be compared to the log-score of the prediction based on the frequencies of the events $y_t$, \ie
\begin{align}\label{H.definition}
    H = - \sum_{y\in\mcY} f_y \log f_y,
\end{align}
which is the entropy calculated from the estimated frequencies.

\begin{table}[tb]
    \centering
    \begin{tabular}{c | c | c| c |c| c}
       \multicolumn{2}{c|}{} 
       & NHL   &   NHL  & EPL & NFL\\
       \multicolumn{2}{c|}{}  
        & Bradley-Terry  & Davidson & Davidson & Davidson\\
        \multicolumn{2}{c|}{}  
        & $\eta=0.08$  & $\eta=0.05$, $\kappa=0.63$ & $\eta=0.10$, $\kappa=0.67$   & $\eta=0.06$, $\kappa=5.5\cd 10^{-3}$\\
    \hline
    \multirow{3}{*}{v-SKF}   &  ($v_0$, $\epsilon$) &
    ($0.01$, $3\cd 10^{-5}$) &  ($0.003$, $3\cd 10^{-5}$)  &    ($0.04$, $10^{-7}$) & ($0.02$, $10^{-4}$)\\
    &  $\ov{\tr{LS}}_\tr{init}$ &
    $0.688$ &  $1.063$  &    $1.055$ & $0.679$\\
    &  $\ov{\tr{LS}}_\tr{final}$ &
    $0.678$ &  $1.064$  & $0.974$ & $0.640$\\
    \hline
    \multirow{3}{*}{SG}      &  ($K$) &
    ($0.01$) &  ($0.003$) & ($0.015$) & ($0.015$)\\
    &  $\ov{\tr{LS}}_\tr{init}$ &
    $0.688$ &  $1.063$ & $1.052$ & $0.678$\\
    &  $\ov{\tr{LS}}_\tr{final}$ &
    $0.678$ & $1.064$  & $0.976$ &  $0.641$\\
    \hline
    \multicolumn{2}{c|}{$H$}         &  $0.688$ &  $1.071$  &    $1.061$  & $0.700$  
    \end{tabular}
    \caption{Log-score obtained in the \gls{nhl}, the \gls{nfl}, and the \gls{epl} games using the the \gls{vskf} and the \gls{sg} algorithms ( the \gls{kf} and the \gls{kf} ratings yield the same results). The entropy, $H$, calculated from  \eqref{H.definition} is shown as a reference. The Bradley-Terry model corresponds to the binary games (in \gls{nhl}), while the Davidson model takes into account the ternary outcomes. Due to a very small frequency of draws in the \gls{nfl}, the results are practically binary but the presence of the draws affects the entropy which exceeds the limit for the binary variable.
    }
    \label{tab:log-score}
\end{table}

The conclusions drawn from the synthetic data also hold here: the performance of the \gls{vskf} and the \gls{kf} algorithms is virtually the same. The \gls{sg} rating is taken as the representative of other simplified algorithms.

We observe in \tabref{tab:log-score} that the predictions in the \gls{nhl} are merely better than the entropy and, referring to \figref{Fig:Thurston.noisy} we might attribute it to the ``noisy'' game outcomes, which would also explain why the results produced by the \gls{vskf} and the \gls{sg} algorithms are virtually the same, and why we cannot see any differences even in the  initialization phase of the algorithm. 

By the same token, we can say that the noise decreases in \gls{nfl} results, and more so in the \gls{epl} ones: so we can distinguish between the performance of the initialization and after the convergences. Yet, the improvement due to the use of the \gls{vskf} and the \gls{kf} algorithms is still negligible. This also can be intuitively understood from the parameters we found to minimize the average log-score. Note that, for the \gls{epl} we use the variance $v_0=0.04$ and $s=1$, but, applying \propref{Prop:SKF.scale} we  might equally well use $v_0=1$ and $s=5$; the latter scenario may be related to a model with large outcome noise $\sigma$ as shown in \figref{Fig:Thurston.noisy}.

The difference between the \gls{vskf} and the \gls{sg} algorithms may be also appreciated by inspecting the temporal evolution of $\bmu_t$, shown in \figref{Fig:Trajectories} and obtained for the 2009/10 \gls{epl} season. While the differences in the log-score results shown in \tabref{tab:log-score} are rather small, we can appreciate that the skills estimated using the \gls{vskf} converge very fast to the final values (after 50 days, approx.), to which the \gls{sg} rating also converges but the time required is longer (200 days, approx.); this effect is particularly notable for the teams with extreme values of the means, that is, for very strong, as well as very weak teams.

\begin{figure}
    \centering
    \scalebox{\sizf}{\includegraphics[width=\sizfs\linewidth]{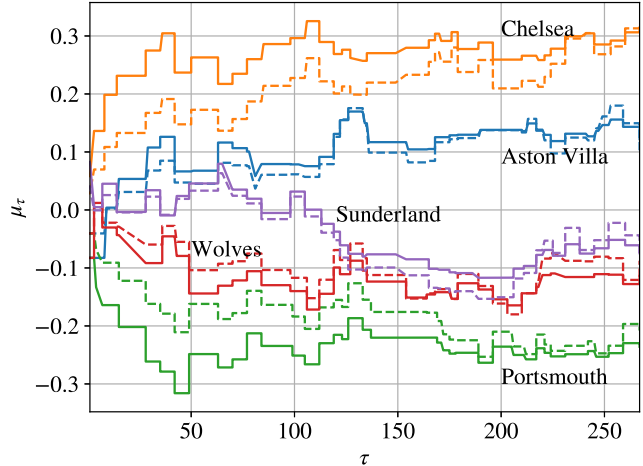}}
    \caption{Evolution of the (selected) means $\mu_\tau$ indexed with the time-stamp $\tau(t)$ in the 2009-10 \gls{epl} season obtained using the \gls{vskf} (solid lines) and the \gls{sg} (dashed lines) rating algorithms.}
    \label{Fig:Trajectories}
\end{figure}

\section{Conclusions}\label{Sec:Conclusions}

In this work we propose a class of online Bayesian rating algorithms for one-on-one games which can be used with any skills-outcome model and which encompasses the case of the group sports, a case typically encountered in eSports. By using various simplifications to represent the posterior covariance matrix in the Gaussian distributions, we obtain different algorithm in the same class. 

Deriving such a generic algorithms should not only streamline the passage from the skills-outcome model to the actual online algorithm but also provides a fresh insight into the relationship between the existing rating methods such as the Elo, the Glicko and the TrueSkill algorithms. Their differences and similarities are discussed and we demonstrate that, the Glicko and the TrueSkill algorithms may be seen as instances of our generic algorithms.  This is an interesting observation in its own right as it unifies the view on these two popular algorithms, which, even if derived  from different principles, are now shown in a common framework. We also provide a new insight into the interpretation of the Elo algorithm.

We show numerical examples applied to the synthetic-- and the empirical data, which provide guidelines about the conditions under which the algorithms should be used. In particular, our results indicate that the differences between the \gls{kf} rating (with a full representation of the covariance matrix) and the \gls{vskf} rating (only diagonal of the covariance is preserved) are negligible. The \gls{vskf} is, in fact, very similar to the Glicko and the TrueSkill algorithms. 

We  show that further simplification of the covariance matrix may be counterproductive and the simple, \acrfull{sg} rating may be then a competitive solution, even though it cannot be treated as a Bayesian algorithm as it only provides the point estimate of the skills. The simple \gls{sg}-based rating is indeed appealing and particularly useful in very noisy data, \ie when the prediction of the game outcomes cannot be reliably inferred from the estimated skills. 

These observations, made in the synthetic setup are then confirmed in empirical data, where we analyse the game results from the professional hockey, the American football, and the association football games. Indeed, the differences between the \gls{vskf} and the \gls{sg} results are, at best, small but notable (in football, where the data is relatively not noisy) and, at worst, negligible (in hockey, where the game outcomes are very noisy). In fact, the very concept of observational noise in the sport outcomes received very little attention in the literature and we believe studying it in more depth is an interesting research venue.

Overall conclusion regarding the applicability of the algorithms is that, in reliable (not noisy) data,  the online Bayesian rating algorithm may provide improved convergence, and potential skill-tracking capability. On the other hand, the reality of sport competition outcomes may not conform to these requirements and, when dealing with the noisy observations, the simple algorithms, such as the Elo rating (which is a an instantiation of the \gls{sg} rating) may be equally useful.

\begin{appendices}

\section{Proof of \propref{Prop:DKL}}\label{Proof:DKL}

Our goal is to find the Gaussion distribution $\tilde{f}(\btheta)=\mcN(\btheta;\bmu,\bV)$ under the form  \eqref{covariance.cases} minimizing the \gls{kl} divergence with a given distribution $f(\btheta)$
\begin{align}
    D_\tr{KL}\big(f|| \tilde{f}\big)
    &=\int f(\btheta)\log\frac{f(\btheta)}{\tilde{f}(\btheta)} \dd \btheta\\
    \label{DKL.mu.V}
    &\propto\frac{1}{2}\log\tr{det}(2\pi\bV)
    +\frac{1}{2}\int f(\btheta)(\btheta-\bmu)\T\bV^{-1}(\btheta-\bmu)\dd \btheta.
\end{align}
Gradient of \eqref{DKL.mu.V} with respect to $\bmu$ is zeroed for $\bmu=\Ex[\btheta]$, and this, irrespectively of the form of $\bV$, this proves \eqref{bmu.yt}. This is a well-known result, as well, as the one which says that, to minimize \eqref{DKL.mu.V} we also have to use $\bV=\tr{Cov}[\btheta]=\Ex[(\btheta-\bmu)\T(\btheta-\bmu)]$, this is the claim in  \eqref{bV.yt}.

Now assume that we use the vector-covariance model, \ie we have to find $\tilde{f}(\btheta)=\mcN(\btheta;\bmu,\tr{diag}(\bv))$. Then, \eqref{DKL.mu.V} becomes
\begin{align}\label{DKL.vector}
 D_\tr{KL}\big(f|| \tilde{f}\big)
 \propto \frac{1}{2}\sum_{m=1}^M \log v_m + \sum_{m=1}^M \frac{\tr{Var}[\theta_m]}{2v_m},
\end{align}
where $\tr{Var}[\theta_m]$ is the variance of $\theta_m$.

Zeroing the derivative of \eqref{DKL.vector} with respect to $v_m$ yields $v_m=\tr{Var}[\theta_m]$, that is,  $\bv=\tr{di}(\tr{Cov}[\btheta])$ which proves \eqref{bv.t.DKL}.

Finally, if we adopt scalar-covariance model $\tilde{f}(\btheta)=\mcN(\btheta;\bmu,v\bI)$, \eqref{DKL.vector} becomes
\begin{align}\label{DKL.scalar}
D_\tr{KL}\big(f|| \tilde{f}\big)
 \propto \frac{M}{2} \log v + \frac{1}{2v} \sum_{m=1}^M \tr{Var}[\theta_m],
\end{align}
whose derivative with respect to $v$ is zeroed if $v=\frac{1}{M}\sum_{m=1}^M\tr{Var}[\theta_m]$, and this proves \eqref{v.t.DKL}.

\section{Proof of \propref{Prop:SKF.scale}}\label{Proof:SKF.scale}

For brevity, let us use the symbol $\check{(\cd)}$ to denote only the scaled variables, \eg $\check{\bV}\equiv\bV(s,s^2v_0,s^2\epsilon)$; the unscaled  ones are used without the symbols, \eg $\bV\equiv \bV(1,v_0,\epsilon)$.

The proof is done by induction: by construction, the initialization satisfied the Proposition, \ie $\check{\bV}_0=s^2\bI=s^2\bV_0$,  and we suppose that $\check{\bV}_{t-1}=s^2 \bV_{t-1}$ and $\check{\bmu}_{t-1}=s \bmu_{t-1}$ hold. Then we must have  $\beta_{t}\bx_t\T\check{\bmu}_{t-1}=s\beta_{t}\bx_t\T\bmu_{t-1}$, and $g_t$ and $h_t$ are not affected by scaling. Then we also have 
$\check{\ov{\bV}}_t=\beta^2_t\check{\bV}_{t-1}+s^2\epsilon_t\bI =s^2 \ov{\bV}_t$ and $\check{\omega}_t=s^2\omega_t$, so \eqref{oneshot.mean.KF} may be written as
\begin{align}
    \check{\bmu}_t
    &=\beta_{t} \check\bmu_{t-1} + \check{\ov{\bV}}_t\bx_t\frac{sg_t}{s^2+h_t\check\omega_t}\\
    &= \beta_{t} s \bmu_{t-1} + s^2\ov{\bV}_t\bx_t\frac{sg_t}{s^2+s^2h_t\omega_t}
    =s\bmu_{t},
\end{align}
and \eqref{Vt.update.KF}, as 
\begin{align}
    \check{\bV}_t
    &=\check{\ov{\bV}}_{t-1} + \check{\ov{\bV}}_{t-1}\bx_t\bx_t\T\check{\ov{\bV}}_{t-1}\frac{h_t}{s^2+h_t\check\omega_t}\\
    &= s^2\ov{\bV}_{t-1} + s^4\ov{\bV}_{t-1}\bx_t\bx_t\T\ov{\bV}_{t-1}\frac{h_t}{s^2+s^2h_t\omega_t}=s^2 \bV_t.
\end{align}
This ends the proof for the \gls{kf} algorithm. By extension all other algorithms derived from the \gls{kf} algorithm must satisfy the claims of \propref{Prop:SKF.scale}, which may be also proven with the steps shown above applied to the \gls{vskf}, \gls{sskf}, and \gls{fskf} algorithms.
\end{appendices}


\end{document}